\documentclass[11pt]{article}

\PassOptionsToPackage{numbers,sort&compress}{natbib}
\usepackage{natbib}
\usepackage[utf8]{inputenc}
\usepackage[T1]{fontenc}
\usepackage{microtype}
\usepackage{url}
\usepackage{xurl}
\usepackage{booktabs}
\usepackage{amsmath,amssymb,amsfonts}
\usepackage{amsthm}
\usepackage{graphicx}
\usepackage{multirow}
\usepackage{array}
\usepackage{enumitem}
\usepackage{algorithm}
\usepackage{algpseudocode}
\usepackage{longtable}
\usepackage{caption}
\usepackage{subcaption}
\usepackage{xcolor}
\usepackage{float}
\usepackage[hidelinks]{hyperref}

\usepackage[margin=1in]{geometry}

\setlist[itemize]{leftmargin=*,topsep=2pt,itemsep=1pt}
\setlist[enumerate]{leftmargin=*,topsep=2pt,itemsep=1pt}

\newcommand{\R}{\mathbb{R}}
\newcommand{\Sphere}{\mathbb{S}^{2}}
\newcommand{\abs}[1]{\left|#1\right|}
\newcommand{\norm}[1]{\left\lVert #1 \right\rVert}
\newcommand{\med}{\operatorname{median}}
\newcommand{\TriDE}{TriDE}
\newcommand{\eMean}{\ensuremath{\bar e}}
\newcommand{\eMed}{\ensuremath{\tilde e}}
\newcommand{\eP}{\ensuremath{e_{90}}}
\newcommand{\tMean}{\ensuremath{\bar t}}
\newcommand{\tMed}{\ensuremath{\tilde t}}
\newcommand{\tP}{\ensuremath{t_{90}}}
\newcommand{\tauDir}{\ensuremath{\tau_{\mathrm{dir}}}}
\newcommand{\dTauDir}{\ensuremath{\Delta\tau_{\mathrm{dir}}}}

\providecommand{\Sph}{\mathbb{S}}
\providecommand{\Prob}{\mathbb{P}}

\providecommand{\err}{\operatorname{err}}
\providecommand{\argmin}{\operatorname*{arg\,min}}

\newtheorem{theorem}{Theorem}[section]
\newtheorem{lemma}[theorem]{Lemma}
\newtheorem{corollary}[theorem]{Corollary}
\newtheorem{definition}[theorem]{Definition}

\newtheorem{assumption}[theorem]{Assumption}

\title{\vspace{-1.5em}
TriDE: Triangle-Consistent Translation Directions for Global Camera Pose Estimation
\vspace{-0.5em}}

\author{
Francisco Chen \quad Yiran Wang \quad Yunpeng Shi\thanks{Corresponding author. ypshi@ucdavis.edu}\\
Department of Mathematics\\
University of California, Davis
}

\date{}

\begin{document}

\maketitle

\begin{abstract}
Pairwise translation directions are a key input to camera location estimation in global structure-from-motion. Existing estimators usually process each image pair independently, producing directions that may be locally plausible but inconsistent with the other relative directions in the viewing graph. To jointly estimate the direction, we propose \emph{TriDE}, which exploits camera-triangle consistency as an efficient higher-order verification signal. Instead of solving a costly global nonlinear optimization problem that is sensitive to initialization, TriDE refines unreliable pairwise directions through message passing between directions and their incident weighted triangles. This information propagation strategy enables us to establish a strong phase-transition bound for exact recovery under a realistic random corruption model. Experiments on real image graphs show that TriDE improves direction accuracy by a large margin and yields better downstream camera locations, providing a practical link between local pairwise estimation and global camera pose geometry.
\end{abstract}

\section{Introduction}
Structure-from-motion (SfM) reconstructs camera motion and sparse 3D scene structure from overlapping images~\citep{hartley2003multiple,schonberger2016structure}. In global SfM, an image collection is represented as a viewing graph, whose nodes are cameras and whose edges encode pairwise image matches. A typical pipeline first synchronizes rotations, then estimates pairwise translation directions, solves for camera locations by translation averaging, and finally triangulates and refines 3D points~\citep{martinec2007robust,cui2015global,pan2024global}. Translation-direction estimation therefore sits at a critical interface between local image matching and global camera localization: errors at this stage can propagate through translation averaging and degrade the camera geometry used by all subsequent SfM stages.

The central difficulty is that translation directions are inferred from local image evidence but must be consistent in the view graph. In structured scenes, repeated patterns and approximate symmetries, such as similar windows or facade elements, can make incorrect matches locally plausible. A relative direction may explain the keypoint correspondences for a single image pair while being incompatible with the directions on adjacent edges. Thus, reliable global localization requires using both pointwise correspondence evidence on each edge and higher-order geometric consistency across the graph.

Mathematically, for an image pair $(i,j)$ with camera centers $\mathbf{y}_i,\mathbf{y}_j\in\R^3$, the relative translation direction is
\begin{equation}
    \mathbf{g}_{ij}
    =
    \frac{\mathbf{y}_i-\mathbf{y}_j}{\norm{\mathbf{y}_i-\mathbf{y}_j}}
    \in\Sphere,
    \label{eq:intro-direction}
\end{equation}
where $\mathbf{g}_{ij}$ and $-\mathbf{g}_{ij}$ represent the same unoriented line. Classical pairwise methods estimate a fundamental or essential matrix from keypoint correspondences and then extract the relative translation direction~\citep{hartley2003multiple,hartley1997defense,nister2004efficient}. These methods are simple and widely used, but they treat each image pair independently and do not revisit the estimated directions after absolute rotations have been synchronized.

Rotation synchronization provides additional structure that can be exploited. Once all measurements are expressed in a common coordinate frame, each matched keypoint pair induces a correspondence normal $\mathbf{x}_{ij,r}\in\Sphere$. In the noiseless case, an inlier correspondence satisfies
\begin{equation}
    \mathbf{g}_{ij}^{\top}\mathbf{x}_{ij,r}=0 .
    \label{eq:intro-point-orthogonality}
\end{equation}
Hence $\mathbf{g}_{ij}$ can be estimated as the one-dimensional normal to the two-dimensional subspace spanned by inlier correspondence normals. Robust subspace recovery (RSR) methods exploit this pointwise structure and can produce strong edge-level direction estimates~\citep{ozyesil2015robust,lerman2018fast,yu2024subspace}. However, they remain edge-local: they can select a direction well supported by the matches on one image pair without checking whether it agrees with neighboring cameras.

The missing cue is higher-order geometry in the viewing graph. For any nondegenerate camera triangle $(i,j,k)$, the noiseless directions $\mathbf{g}_{ij}$, $\mathbf{g}_{jk}$, and $\mathbf{g}_{ik}$ are coplanar up to orientation; equivalently, their absolute scalar triple product vanishes. Triangle coplanarity directly couples adjacent translation directions and provides a graph-level consistency test that is unavailable to purely pairwise estimators. Existing view-graph filtering and translation-averaging methods also exploit global consistency, but mainly by pruning or reweighting unreliable edges, correspondences, or constraints~\citep{wilson2014robust,shi2018estimation,shah2018view,manam2022correspondence}. Such strategies can suppress severe outliers, but they may also reduce connectivity and discard edges whose raw correspondences still contain useful signal.

We propose \TriDE{}, a translation-direction estimation method that turns higher-order view-graph consistency into a mechanism for refining pairwise directions. Instead of treating unreliable measurements only as edges or correspondences to be downweighted or removed, \TriDE{} revisits the correspondence evidence on each edge and uses neighboring camera triangles to identify directions that are more globally compatible. This allows poor initial directions to be corrected when their raw matches still contain useful signal. The resulting direction estimates are more consistent with the surrounding graph, while the original view graph is kept intact for downstream translation averaging.

\section{Related Work}

Pairwise translation directions are traditionally obtained by estimating a fundamental matrix, or an essential matrix for calibrated cameras, from feature correspondences and then decomposing the relative pose~\citep{nister2004efficient,fischler1981random,barath2019magsac,barath2020magsac++}. These methods are modular and effective when the inlier set is clean, but the translation direction is only a byproduct of pairwise epipolar fitting. As a result, the estimate can be sensitive to outlier ratios, noise-scale choices, low-parallax or nearly planar configurations, and local model selection. More importantly, these methods use only image-pair information and do not exploit the graph context available in global SfM.

After rotations are synchronized, robust subspace recovery gives a more direct route to translation-direction estimation. Each correspondence normal should be orthogonal to the unknown direction, so least-squares PCA or robust alternatives such as least-unsquared deviations, Fast Median Subspace (FMS), and Subspace-constrained Tyler's estimator (STE) can recover the direction from an inlier-dominated subspace~\citep{ozyesil2015robust,lerman2018fast,lerman2018exact,yu2024subspace}. These methods use the keypoint measurements more directly than epipolar decomposition and provide strong local initializers. Their limitation is not local robustness, but the absence of graph awareness: the selected direction is judged by its support on one edge, not by its compatibility with adjacent edges.

Global SfM and translation-averaging methods use view-graph structure throughout the reconstruction pipeline. Rotation averaging enforces consistency among relative rotations~\cite{shi2020message,lerman2022robust,shi2022robust,chatterjee2013efficient}, while translation averaging estimates camera locations from relative directions or displacement constraints~\cite{ozyesil2015robust,wilson2014robust,lerman2018exact,shi2018estimation,zhuang2018baseline,manam2024fusing}. Robust pipelines further improve reliability by reweighting correspondences, filtering view graphs, or downweighting unreliable constraints before or during global optimization~\citep{martinec2007robust,cui2015global,wilson2014robust,shi2018estimation,shah2018view,manam2022correspondence,pan2024global}. These works demonstrate the importance of graph-level consistency for robust SfM, but they primarily use it to stabilize downstream estimation from fixed pairwise measurements rather than as feedback for refining the translation directions estimated on each edge.

A more direct approach is to solve a global nonlinear problem over all pairwise directions, for example by combining point residuals with triangle determinant constraints $\det(\mathbf{g}_{ij},\mathbf{g}_{jk},\mathbf{g}_{ik})=0$ on graph triangles. Tangent-space Gauss--Newton (GN) and Levenberg--Marquardt (LM) methods can be applied to such objectives~\citep{levenberg1944method,marquardt1963algorithm}. However, the optimization is nonconvex on a product of spheres, and the determinant residuals are multilinear in the unknown directions. Continuous solvers such as \textsc{GN} and \textsc{LM} therefore tend to follow a single local trajectory from the supplied initialization and can be sensitive to poor starting directions. We use them mainly as diagnostic baselines.

\TriDE{} complements these approaches by using graph consistency for candidate verification during direction estimation. It generates plausible directions from correspondence normals and selects among them using triangle coplanarity, rather than relying solely on local pairwise fitting, graph pruning, or a fully coupled nonlinear formulation. This enables an inaccurate edge direction to be replaced when a better correspondence-induced candidate agrees with neighboring camera triples, while preserving the view graph for downstream translation averaging.

Our contributions are summarized as follows:
\begin{enumerate}[leftmargin=*,nosep]
    \item We introduce TriDE, a graph-preserving translation-direction refinement method that combines correspondence-level evidence with triangle-level consistency.

    \item We show how triangle consistency can be used as a bounded candidate-verification signal, allowing unreliable directions to be repaired without pruning the view graph or solving a large coupled nonconvex problem.

    \item We provide a one-sweep exact-recovery analysis under a structured recoverability model, yielding interpretable clean-triangle coverage thresholds for complete, Erdős-Rényi, and random geometric graphs.

    \item We demonstrate on ETH3D image graphs that TriDE improves direction accuracy across multiple local initializers and improves downstream camera location estimation with small additional runtime.
\end{enumerate}

\section{Problem Setup}

Let $G=(V,E)$ be the view graph, where $V=\{1,\ldots,n_{\mathrm{cam}}\}$ is the set of camera indices and $E\subseteq\{\{i,j\}:i,j\in V,\ i\ne j\}$ is the set of matched image pairs. For each edge $e=(i,j)\in E$, we seek the unoriented pairwise translation direction $\mathbf{g}_{ij}\in\Sphere$ defined in Eq.~\eqref{eq:intro-direction}; we also write $\mathbf{g}_e\equiv\mathbf{g}_{ij}$. All directions and correspondence normals are expressed in a common frame after a rotation backend synchronizes the camera rotations~\citep{chatterjee2013efficient,shi2020message}. With camera-frame-to-world rotation $\mathbf{R}_i\in\mathrm{SO}(3)$, calibration matrix $\mathbf{K}_i\in\R^{3\times 3}$, and homogeneous measurement $\tilde{\mathbf{u}}_{i,r}\in\R^3$, a matched keypoint pair gives bearings $\mathbf{b}_{i,r},\mathbf{b}_{j,r}\in\Sphere$:
\begin{equation}
    \mathbf{b}_{i,r}=\frac{\mathbf{R}_i\mathbf{K}_i^{-1}\tilde{\mathbf{u}}_{i,r}}{\norm{\mathbf{R}_i\mathbf{K}_i^{-1}\tilde{\mathbf{u}}_{i,r}}},
    \qquad
    \mathbf{b}_{j,r}=\frac{\mathbf{R}_j\mathbf{K}_j^{-1}\tilde{\mathbf{u}}_{j,r}}{\norm{\mathbf{R}_j\mathbf{K}_j^{-1}\tilde{\mathbf{u}}_{j,r}}} .
    \label{eq:bearings}
\end{equation}
Following pairwise direction-estimation formulations from epipolar geometry~\citep{hartley2003multiple,ozyesil2015robust}, the stored correspondence normal is
\begin{equation}
    \mathbf{x}_{ij,r}=\frac{\mathbf{b}_{i,r}\times \mathbf{b}_{j,r}}{\norm{\mathbf{b}_{i,r}\times \mathbf{b}_{j,r}}}\in\Sphere,
    \label{eq:correspondence-normal}
\end{equation}
with zero cross-products discarded. Because edge orientation or match order may flip signs, all residuals are sign-invariant. An inlier correspondence satisfies Eq.~\eqref{eq:intro-point-orthogonality} up to noise. Let $\mathcal{X}_e=\{\mathbf{x}_{e,r}\}_{r=1}^{n_e}\subset\Sphere$ be the local evidence set for edge $e$, where $n_e=|\mathcal{X}_e|$ is the number of stored correspondence normals, and let $\mathbf{g}_e^{(0)}\in\Sphere$ be an initial direction from a compatible local estimator.

Let $\mathcal{T}$ denote the set of graph triangles. For a triangle $\tau=(i,j,k)\in\mathcal{T}$ with edges $e_{ij}$, $e_{jk}$, and $e_{ik}$, noiseless directions in the common frame are coplanar up to sign:
\begin{equation}
    \det(\mathbf{g}_{ij},\mathbf{g}_{jk},\mathbf{g}_{ik}) = 0 .
    \label{eq:triangle-det}
\end{equation}
\TriDE{} uses absolute determinant residuals as a sign invariant verification cue. This cue alone is not sufficient, since incorrect candidates can also be coplanar, especially under nearly planar motion. Therefore, local measurements and a coarse initial direction graph remain important. At the same time, because the candidate pool is generated from raw correspondence normals rather than only by tangent descent from the current direction, \TriDE{} can move an edge away from a poor initialization when a better sampled candidate has both point support and triangle support. For downstream location averaging, all methods receive the same cheirality style sign vote after direction estimation. No ground truth signs enter the \TriDE{} loop.

\section{TriDE: Message Passing Among Keypoints, Edges, and Triangles}
\label{sec:tride}
Instead of solving a costly joint optimization problem, \TriDE{} propagates information efficiently across keypoints, pairwise directions, and camera triangles.  It takes as input a fixed viewing graph, synchronized rotations, per-edge correspondence normals, and initial pairwise direction estimates.  The graph and rotations remain fixed throughout.  During each sweep, \TriDE{} updates direction candidates using triangle consistency, weights triangles by the current reliability of their neighboring edges, and then refreshes edge reliability from the original keypoint measurements.  In this way, correspondence-level support, edge-level reliability, and triangle-level consistency exchange information and reinforce one another.

\subsection{Keypoints $\rightarrow$ Edge: Message Passing for Edge Reliability}
At sweep \(t\), edge \(e\) retains its current direction and augments it with up
to \(n_{\mathrm{cand}}\) random two-normal hypotheses. This follows the standard
observation that two inlier correspondence normals determine a candidate
translation direction through their orthogonal complement
\citep{ozyesil2015robust,lerman2018fast}. For a sampled index-pair set
\(\mathcal{P}_e^{(t)}\subseteq\{1,\ldots,n_e\}^2\), the candidate set
\(\mathcal{C}_e^{(t)}\subset\Sphere\) is
\begin{equation}
    \mathcal{C}_e^{(t)} = \{\mathbf{g}_e^{(t)}\}\cup
    \left\{\frac{\mathbf{x}_{e,r_1}\times \mathbf{x}_{e,r_2}}
    {\norm{\mathbf{x}_{e,r_1}\times \mathbf{x}_{e,r_2}}}:
    (r_1,r_2)\in\mathcal{P}_e^{(t)},\;
    \norm{\mathbf{x}_{e,r_1}\times \mathbf{x}_{e,r_2}}>0\right\}.
    \label{eq:candidate-pool}
\end{equation}
Each cross product gives a locally plausible edge direction. Keeping
\(\mathbf{g}_e^{(t)}\) provides a fallback candidate, while the bounded pool
makes the limitation explicit: if neither the initializer nor the sampled
two-normal hypotheses contains a good direction, \TriDE{} cannot recover it.

For an edge direction $\mathbf{g}\in\Sphere$, define the unoriented angular point residual
\begin{equation}
    r_{e,r}(\mathbf{g})=\arcsin\abs{\mathbf{g}^\top \mathbf{x}_{e,r}}\in\R_{\ge 0} .
    \label{eq:angular-point-residual}
\end{equation}
The point support and badness score are the scalars
\begin{equation}
    a^{\mathrm{pt}}_e(\mathbf{g})
    =\frac{1}{n_e}\sum_{r=1}^{n_e}
    \exp\!\left(-\frac{r_{e,r}(\mathbf{g})^2}{2\sigma^2}\right),
    \qquad
    s_e(\mathbf{g})=1-a^{\mathrm{pt}}_e(\mathbf{g}) .
    \label{eq:point-support}
\end{equation}
The point support \(a^{\mathrm{pt}}_e(\mathbf{g})\) quantifies how well the direction aligns with the normal vectors associated with the correspondences. Lower $s_e(\mathbf{g})$ means stronger local support. We initialize and refresh this score as
\begin{equation}
    s_e^{(0)} = s_e\!\left(\mathbf{g}_e^{(0)}\right),
    \label{eq:init-badness}
\end{equation}
\begin{equation}
    s_e^{(t+1)} = s_e\!\left(\mathbf{g}_e^{(t+1)}\right)
    = 1-a^{\mathrm{pt}}_e\!\left(\mathbf{g}_e^{(t+1)}\right).
    \label{eq:refreshed-badness}
\end{equation}

\subsection{Edge $\rightarrow$ Triangle: Message Passing for Triangle Reliability}

We first introduce the triangle-consistency constraint.  For an edge \(e\) in a triangle \(\tau\), let \(a\) and \(b\) denote the other two edges.  Using their current directions, define the triangle normal
\begin{equation}
    \mathbf{n}_{\tau,e}^{(t)}
    =
    \frac{\mathbf{g}_a^{(t)}\times \mathbf{g}_b^{(t)}}
    {\norm{\mathbf{g}_a^{(t)}\times \mathbf{g}_b^{(t)}}}.
    \label{eq:triangle-normal}
\end{equation}
When the three directions on \(\tau\) are clean and nondegenerate, the true direction on \(e\) is orthogonal to this normal.  Thus clean incident triangles provide normals lying in the two-dimensional subspace orthogonal to \(\mathbf{g}_e^\star\).  However, not every incident triangle is reliable: if either neighboring edge \(a\) or \(b\) is corrupted, then the resulting normal may give misleading information for updating edge \(e\).  We therefore weight each incident triangle by the current reliability or noise level of its two neighboring edges:
\begin{equation}
    \alpha_{\tau,e}^{(t)}
    =
    \exp[-\beta(s_a^{(t)}+s_b^{(t)})],
    \label{eq:raw-weight}
\end{equation}
and normalize over all valid incident triangles,
\begin{equation}
    w_{\tau,e}^{(t)}
    =
    \frac{\alpha_{\tau,e}^{(t)}}
    {\sum_{\tau'\in\mathcal{T}_e^{(t)}}\alpha_{\tau',e}^{(t)}},
    \label{eq:weights}
\end{equation}
where \(\mathcal{T}_e^{(t)}\subseteq\mathcal{T}\) is the set of valid incident triangles for edge \(e\). The weight \(w_{\tau,e}^{(t)}\) measures how much the triangle \(\tau\) should be trusted when updating edge \(e\).  Triangles whose two neighboring edges have low badness receive larger weights, while triangles involving unreliable neighboring directions are downweighted.

\subsection{Triangle $\rightarrow$ Keypoints: Message Passing for Keypoint Selection}
Each candidate direction \(\mathbf{c}\in\mathcal{C}_e^{(t)}\), generated either from the current direction or from a candidate keypoint-match pair, is scored by the weighted triangle inconsistency
\begin{equation}
    \ell_e^{(t)}(\mathbf{c})
    =
    \sum_{\tau\in\mathcal{T}_e^{(t)}}
    w_{\tau,e}^{(t)}
    \abs{\mathbf{c}^\top \mathbf{n}_{\tau,e}^{(t)}} .
    \label{eq:weighted-score}
\end{equation}
A small value of \(\ell_e^{(t)}(\mathbf{c})\) means that \(\mathbf{c}\) is nearly orthogonal to the reliable triangle normals incident to \(e\), and is therefore more compatible with the surrounding graph geometry.

For each edge with at least one valid triangle context, \TriDE{} selects
\begin{equation}
    \mathbf{g}_e^{(t+1)}
    \in
    \arg\min_{\mathbf{c}\in\mathcal{C}_e^{(t)}}
    \ell_e^{(t)}(\mathbf{c}),
    \label{eq:direction-update}
\end{equation}
and then refreshes \(s_e^{(t+1)}\) according to Eq.~\eqref{eq:refreshed-badness}.  This closes one loop of information propagation, and the resulting message-passing procedure is repeated for a few sweeps.  The updates are synchronous: all edge updates in sweep \(t\) are computed from \(\{\mathbf{g}_e^{(t)}\}_{e\in E}\) and \(\{s_e^{(t)}\}_{e\in E}\), and only then are the new states \(\{\mathbf{g}_e^{(t+1)}\}_{e\in E}\) and \(\{s_e^{(t+1)}\}_{e\in E}\) formed. \TriDE{} is summarized in Algorithm~\ref{alg:tride}.
\begin{algorithm}[H]
\caption{TriDE}
\label{alg:tride}
\begin{algorithmic}[1]
\Require View graph $G=(V,E)$, normal sets $\{\mathcal{X}_e\}_{e\in E}$, initial directions $\{\mathbf{g}_e^{(0)}\}_{e\in E}$, candidate budget $n_{\mathrm{cand}}$, sharpness $\beta$, support scale $\sigma$, maximum sweeps $k_{\max}$, tolerance $\tau_{\mathrm{stop}}$
\State Initialize $s_e^{(0)}$ by Eq.~\eqref{eq:init-badness}; precompute graph triangles.
\For{$t=0,1,\ldots,k_{\max}-1$}
    \For{each edge $e\in E$}
        \State Build $\mathcal{C}_e^{(t)}$ by Eq.~\eqref{eq:candidate-pool}.
        \State Build valid triangle normals by Eq.~\eqref{eq:triangle-normal}.
        \If{edge $e$ has no valid triangle context}
            \State Set $\mathbf{g}_e^{(t+1)}\leftarrow \mathbf{g}_e^{(t)}$ and $s_e^{(t+1)}\leftarrow s_e^{(t)}$.
        \Else
            \State Weight triangles by Eqs.~\eqref{eq:raw-weight}--\eqref{eq:weights}.
            \State Select $\mathbf{g}_e^{(t+1)}$ by Eq.~\eqref{eq:direction-update}.
            \State Refresh $s_e^{(t+1)}\leftarrow s_e(\mathbf{g}_e^{(t+1)})$.
        \EndIf
    \EndFor
    \State $\delta_t\leftarrow\med_{e\in E}\arccos\abs{(\mathbf{g}_e^{(t+1)})^\top \mathbf{g}_e^{(t)}}$.
    \If{$t\ge 1$ and $\delta_t < \tau_{\mathrm{stop}}$}
        \State \textbf{break}
    \EndIf
\EndFor
\State \Return estimated directions $\{\mathbf{g}_e\}_{e\in E}$ and refreshed badness diagnostics $\{s_e\}_{e\in E}$.
\end{algorithmic}
\end{algorithm}

\subsection{Implementation and complexity}
\label{sec:complexity}

Let $m=|E|$, $n_{\triangle}=|\mathcal{T}|$, $d_e^{(t)}=|\mathcal{T}_e^{(t)}|$, and $n_{\mathrm{corr}}=\sum_{e\in E}n_e$. Graph triangles are enumerated once from the fixed edge list; the reported runs use the full triangle set.

Each sweep constructs at most $n_{\mathrm{cand}}$ candidates per edge, touches each valid edge--triangle incidence for triangle construction and weighting, scores candidates against incident triangles, and refreshes selected directions against their local measurements. Since $\sum_e d_e^{(t)}\le 3n_{\triangle}$,
\begin{equation}
    O\!\left(\sum_{e\in E} |\mathcal{C}_e^{(t)}|d_e^{(t)}\right)
    \le O\!\left((n_{\mathrm{cand}}+1)n_{\triangle}\right) .
\end{equation}
For $k_{\max}$ sweeps, the total estimation-stage cost is
\begin{equation}
    O\!\left(k_{\max}\big((n_{\mathrm{cand}}+1)n_{\triangle}+n_{\mathrm{corr}}+n_{\mathrm{cand}}m\big)\right),
\end{equation}
with memory $O(n_{\mathrm{corr}}+m+n_{\triangle})$.

\section{Theory at a Glance}
\label{sec:theory-at-a-glance}

This section summarizes the exact-recovery message for \TriDE{}; formal assumptions and proofs are deferred to Appendix~\ref{app:theory}.  The guarantee is a direction-refinement result: with fixed rotations and candidate pools, one sufficiently sharp \TriDE{} sweep recovers all pairwise translation directions, while downstream location recovery still requires the usual sign-orientation and bearing-rigidity conditions.

The key mechanism is the measurement-refreshed triangle weight.  At support scale \(\sigma_0=1^\circ\), write \(A_e^{\sigma_0}(g)=a_e^{\rm pt}(g)\) and \(s_e(g)=1-A_e^{\sigma_0}(g)\).  For an edge \(ij\) and triangle witness \(k\),
\[
        \exp[-\beta(s_{ik}+s_{jk})]
        \propto
        \exp\{\beta(A_{ik}^{\sigma_0}(g_{ik})+A_{jk}^{\sigma_0}(g_{jk}))\}.
\]
Thus large \(\beta\) makes \TriDE{} favor witnesses whose two neighboring directions have strong point support.

The theory uses a two-class initialization model.  Clean-anchor edges are initialized at the true direction and have high point support; weak edges may be initialized incorrectly but are recoverable if their candidate pools contain the true direction.  Suppose every edge has an \(a\)-fraction of clean-clean witnesses, these witnesses are \(c_{\rm wd}\)-well-distributed, clean witnesses have an effective support gap \(\Delta_{\sigma_0}^{\rm eff}>0\), and every false candidate is separated from the true direction by at least \(\eta\).  Then one \TriDE{} sweep satisfies
\[
        \max_{ij\in E}
        \operatorname{err}(g_{ij}^{+},g_{ij}^{\star})
        \le
        \frac{1-a}{a c_{\rm wd}}
        \exp(-\beta\Delta_{\sigma_0}^{\rm eff}).
\]
Thus the one-sweep error is exponentially small for sufficiently large \(\beta\).  Once this error is below the candidate separation \(\eta\), every edge selects the true direction, up to sign. In practice, we do not take \(\beta\to\infty\).  A moderately large value already makes the softmax weights sharply concentrate on high-support triangle witnesses, while avoiding numerical instability in the unnormalized weights \(\exp(\beta H)\).  We use \(\beta=15\) in our experiments.

\begin{table}[H]
\centering
\caption{Informal one-sweep exact-recovery thresholds for \TriDE{}.  Here \(p\) is the Erd\H{o}s--R\'enyi edge probability, \(q\) is the weak-edge probability, and \(r\) is the connection radius of a 3D random geometric graph.  Constants hide well-distributedness, support-gap, candidate-separation, and failure-probability factors.}
\label{tab:tride-phase-transition}
\scriptsize
\begin{tabular}{@{}lccc@{}}
\toprule
Model & Clean-clean witnesses & Sufficient condition & Allowed \(q\) \\
\midrule
Complete graph \(K_n\)
& \(n(1-q)^2\)
& \(n(1-q)^2\gtrsim \log n\)
& \(q\le 1-C\sqrt{\log n/n}\) \\
Erd\H{o}s--R\'enyi \(G(n,p)\)
& \(np^2(1-q)^2\)
& \(np^2(1-q)^2\gtrsim \log |E|\)
& \(q\le 1-C\sqrt{\log |E|/(np^2)}\) \\
3D random geometric graph
& \(nr^3(1-q)^2\)
& \(nr^3(1-q)^2\gtrsim \log n\)
& \(q\le 1-C\sqrt{\log n/(nr^3)}\) \\
\bottomrule
\end{tabular}
\end{table}

The table isolates the graph-coverage requirement.  The full recovery result also requires local candidate recall and a support gap: each recoverable edge must contain the true direction in its candidate pool, and weak directions must not have the same high point support as clean anchors.  With \(B\) random two-normal samples per edge, a sufficient recall condition is
\[
        B\pi_{\min}^2
        \gtrsim
        \log(|E|/\zeta),
\]
where \(\pi_{\min}\) is the minimum inlier fraction and \(\zeta\) is the target failure probability.
\section{Experiments}

\subsection{Implementation details}

All dataset experiments use the frozen code protocol on 11 ETH3D scenes~\citep{schops2017multi}. Rotations are synchronized by MPLS~\citep{shi2020message}; the view graph and world-frame correspondence normals are cached after this rotation stage. Unless stated otherwise, experiments use seeds 2026--2030, and \TriDE{} never constructs, thresholds, or prunes the graph. 

We report angular direction errors in degrees: mean $\bar e$, median $\tilde e$, and 90th percentile $e_{90}$; similarity-normalized location errors: mean \tMean{}, median \tMed{}, and 90th percentile \tP{} and logged direction-stage runtime \tauDir{}. For each upstream family, \dTauDir{} subtracts the matching plain upstream runtime. All runtime numbers were logged on an Apple M4 Pro machine with 24~GB RAM; they are used as implementation-level diagnostics rather than hardware-normalized compute comparisons. Dataset summaries use scene-equal macro averaging: average over seeds within each scene, then average scenes with equal weight. For the auxiliary location benchmark, aggregate errors average only valid scene-level entries rather than imposing an arbitrary penalty on solver failures.

The primary direction benchmark compares each upstream initializer (PCA, STE~\citep{yu2024subspace}, and FMS~\citep{lerman2018fast}) with three graph-level variants: diagnostic continuous \textsc{GN}, diagnostic continuous \textsc{LM}, and +\TriDE{}. The auxiliary location benchmark uses the same initializers and compares the plain upstream estimates, 1DSfM~\citep{wilson2014robust}, IR-AAB~\citep{shi2018estimation}, the same diagnostic \textsc{GN}/\textsc{LM} variants from Appendix~\ref{app:gn-lm}, and \TriDE{}, followed by CycleSync~\citep{licycle} or LUD~\citep{ozyesil2015robust,lerman2018exact}.  The frozen \TriDE{} configuration uses $\sigma=1^\circ$, $n_{\mathrm{cand}}=25$, $\beta=15$, nondegeneracy threshold $a_{\min}=10^{-3}$, at most $k_{\max}=4$ estimation sweeps, and stopping tolerance $10^{-3}$ degrees.

\subsection{Main direction benchmark}

\begin{table}[H]
\centering
\scriptsize
\setlength{\tabcolsep}{2.2pt}
\caption{Main direction-only ETH3D benchmark. Results are scene-equal macro averages; lower angular error is better. \tauDir{} is logged direction-stage runtime. GN/LM are diagnostic determinant-enforcement baselines, and \textsc{RAN} is a random diagnostic rather than a practical initializer. Reductions are computed for +TriDE relative to the corresponding initializer.}
\label{tab:main-direction}
\resizebox{\textwidth}{!}{%
\begin{tabular}{lcccc ccc ccc cccc ccc}
\toprule
& \multicolumn{4}{c}{Initializer} & \multicolumn{3}{c}{GN} & \multicolumn{3}{c}{LM} & \multicolumn{4}{c}{+ \TriDE{}} & \multicolumn{3}{c}{Reduction (\%)} \\
\cmidrule(lr){2-5}\cmidrule(lr){6-8}\cmidrule(lr){9-11}\cmidrule(lr){12-15}\cmidrule(lr){16-18}
Method & \eMean & \eMed & \eP & \tauDir & \eMean & \eMed & \eP & \eMean & \eMed & \eP & \eMean & \eMed & \eP & \tauDir & \eMean & \eMed & \eP \\
\midrule

PCA & 10.916 & 2.978 & 35.379 & 0.657 & 10.744 & 2.748 & 34.382 & 10.870 & 2.965 & 35.575 & \textbf{7.157} & \textbf{1.950} & \textbf{22.477} & 0.703 & 34.436 & 34.520 & 36.468 \\
STE & 8.904 & 1.891 & 30.608 & 0.665 & 8.936 & 2.257 & 30.045 & 8.882 & 1.887 & 30.550 & \textbf{6.678} & \textbf{1.819} & \textbf{20.170} & 0.706 & 24.997 & 3.802 & 34.102 \\
FMS & 9.942 & 2.164 & 34.604 & 0.666 & 9.863 & 2.246 & 33.903 & 9.936 & 2.166 & 34.697 & \textbf{6.823} & \textbf{1.855} & \textbf{20.423} & 0.709 & 31.372 & 14.279 & 40.981 \\
\textsc{RAN} & 57.577 & 60.686 & 84.190 & 0.000 & 57.532 & 60.919 & 84.289 & 55.242 & 57.543 & 82.226 & \textbf{8.594} & \textbf{2.853} & \textbf{25.691} & 0.087 & 85.074 & 95.299 & 69.484 \\
\bottomrule
\end{tabular}%
}
\end{table}

Table~\ref{tab:main-direction} reports scene-equal direction errors for the initializers, the \textsc{GN}/\textsc{LM} diagnostics, and +\TriDE{}. For the practical initializers, adding \TriDE{} improves all three aggregate metrics: PCA drops from $(10.916,2.978,35.379)$ to $(7.157,1.950,22.477)$; STE from $(8.904,1.891,30.608)$ to $(6.678,1.819,20.170)$; and FMS from $(9.942,2.164,34.604)$ to $(6.823,1.855,20.423)$. Their 90th-percentile reductions are $36.468\%$, $34.102\%$, and $40.981\%$, respectively. For STE, the median initializer error is already low, so the gain is most visible in the mean and tail metrics. The \textsc{GN} and \textsc{LM} diagnostics make only modest aggregate changes relative to the initializers and sometimes worsen individual metrics, whereas +\TriDE{} is the best aggregate entry for every practical initializer and reported direction metric. We additionally include \textsc{RAN}, a diagnostic that initializes both directions and initial weights randomly before applying the same \TriDE{} update. This diagnostic drops from $(57.577,60.686,84.190)$ to $(8.594,2.853,25.691)$, including a 69.484\% reduction in \eP; it is used only to probe recovery signal and is not treated as a practical standalone SfM initializer. Per-scene results are in Appendix Table~\ref{tab:direction-per-scene-compact}; the claim is aggregate, not uniform dominance on every scene.
\subsection{Auxiliary downstream location benchmark}

\begin{table}[t]
\centering
\scriptsize
\setlength{\tabcolsep}{3.0pt}
\renewcommand{\arraystretch}{0.95}
\setlength{\aboverulesep}{0.25ex}
\setlength{\belowrulesep}{0.25ex}
\setlength{\cmidrulesep}{0.15ex}
\caption{Auxiliary location benchmark under PCA, STE, and FMS upstream directions. Results use scene-equal aggregation; all-seed failures are omitted from valid-scene averages. Runtime is shared by CycleSync and LUD.}
\label{tab:location}
\begin{tabular}{l cc ccc ccc}
\toprule
& & & \multicolumn{3}{c}{CycleSync} & \multicolumn{3}{c}{LUD} \\
\cmidrule(lr){4-6}\cmidrule(lr){7-9}
Method & \tauDir & \dTauDir & \tMean & \tMed & \tP & \tMean & \tMed & \tP \\
\midrule
STE & 0.669 & -- & 0.100 & 0.037 & 0.246 & 0.193 & 0.106 & 0.417 \\
STE + 1DSfM & 1.381 & 0.712 & 0.174 & 0.118 & 0.368 & 0.183 & 0.088 & 0.450 \\
STE + IR-AAB & 1.303 & 0.634 & 0.097 & \textbf{0.030} & 0.296 & 0.152 & \textbf{0.068} & 0.387 \\
STE + GN & 0.718 & 0.049 & 0.119 & 0.046 & 0.303 & 0.236 & 0.130 & 0.537 \\
STE + LM & 1.020 & 0.351 & 0.102 & 0.037 & 0.260 & 0.196 & 0.111 & 0.429 \\
STE + \TriDE{} & 0.711 & \textbf{0.042} & \textbf{0.092} & 0.039 & \textbf{0.214} & \textbf{0.142} & 0.070 & \textbf{0.307} \\
\midrule
PCA & 0.660 & -- & 0.155 & 0.062 & 0.410 & 0.281 & 0.193 & 0.627 \\
PCA + 1DSfM & 1.358 & 0.697 & 0.181 & 0.063 & 0.500 & 0.330 & 0.231 & 0.743 \\
PCA + IR-AAB & 1.296 & 0.636 & 0.161 & 0.060 & 0.426 & 0.238 & 0.139 & 0.554 \\
PCA + GN & 0.708 & \textbf{0.048} & 0.188 & 0.066 & 0.488 & 0.287 & 0.188 & 0.623 \\
PCA + LM & 0.994 & 0.334 & 0.147 & 0.057 & 0.363 & 0.268 & 0.184 & 0.585 \\
PCA + \TriDE{} & 0.711 & 0.051 & \textbf{0.107} & \textbf{0.041} & \textbf{0.282} & \textbf{0.155} & \textbf{0.085} & \textbf{0.337} \\
\midrule
FMS & 0.675 & -- & 0.129 & 0.043 & 0.363 & 0.226 & 0.155 & 0.492 \\
FMS + 1DSfM & 1.378 & 0.703 & 0.159 & 0.048 & 0.429 & 0.272 & 0.180 & 0.698 \\
FMS + IR-AAB & 1.298 & 0.623 & 0.127 & 0.042 & 0.392 & 0.184 & 0.103 & 0.443 \\
FMS + GN & 0.722 & 0.047 & 0.160 & 0.054 & 0.485 & 0.233 & 0.148 & 0.529 \\
FMS + LM & 1.011 & 0.336 & 0.136 & 0.045 & 0.436 & 0.240 & 0.162 & 0.542 \\
FMS + \TriDE{} & 0.718 & \textbf{0.043} & \textbf{0.091} & \textbf{0.038} & \textbf{0.224} & \textbf{0.144} & \textbf{0.074} & \textbf{0.320} \\
\bottomrule
\end{tabular}
\end{table}

Table~\ref{tab:location} tests whether the direction estimates improve downstream location solvers in aggregate. In the valid-scene averages, +\TriDE{} gives the best mean and 90th-percentile location errors for every upstream block and for both CycleSync and LUD. For PCA and FMS it also gives the best aggregate medians. For STE, IR-AAB has marginally lower aggregate medians than +\TriDE{} for both CycleSync (0.030 vs. 0.039) and LUD (0.068 vs. 0.070), so the downstream claim is strongest for mean and tail errors rather than every median entry.

Relative to the corresponding plain upstream graph, \TriDE{} improves CycleSync \tMean{} from 0.100 to 0.092 for STE, 0.155 to 0.107 for PCA, and 0.129 to 0.091 for FMS\@. For LUD, it improves \tMean{} from 0.193 to 0.142 for STE, 0.281 to 0.155 for PCA, and 0.226 to 0.144 for FMS. The same aggregate tail-error pattern holds for \tP{}; median improvements hold for PCA and FMS, while the STE median differences are small and not uniformly best. At the scene level, the seed-level logs show one qualitative failure case: on \emph{delivery area}, PCA + 1DSfM and FMS + 1DSfM produced no valid CycleSync output across all seeds, whereas the corresponding +\TriDE{} graphs yielded valid CycleSync estimates. This is not a general claim about 1DSfM and is not used as a quantitative score, but it illustrates why preserving graph connectivity can be useful under this protocol. The \textsc{LM} diagnostic improves over \textsc{GN} on the aggregate CycleSync metrics but does not match +\TriDE{} on mean and tail errors, supporting the use of triangle determinants as bounded verification signals rather than hard global constraints. We therefore treat the location experiment as an auxiliary aggregate check rather than a claim of uniform scene-level dominance.

\paragraph{Runtime.}
\TriDE{} adds 0.042--0.051 seconds over the upstream direction stage, compared with about 0.70 seconds for 1DSfM, 0.63 seconds for IR-AAB, and 0.334--0.351 seconds for the \textsc{LM} diagnostic. Thus it has lower logged overhead than the deletion-based filters and damped \textsc{LM} in this protocol; the cheaper \textsc{GN} diagnostic is comparably fast but less accurate.

\begin{figure}[h]
    \centering
    \includegraphics[width=0.95\linewidth]{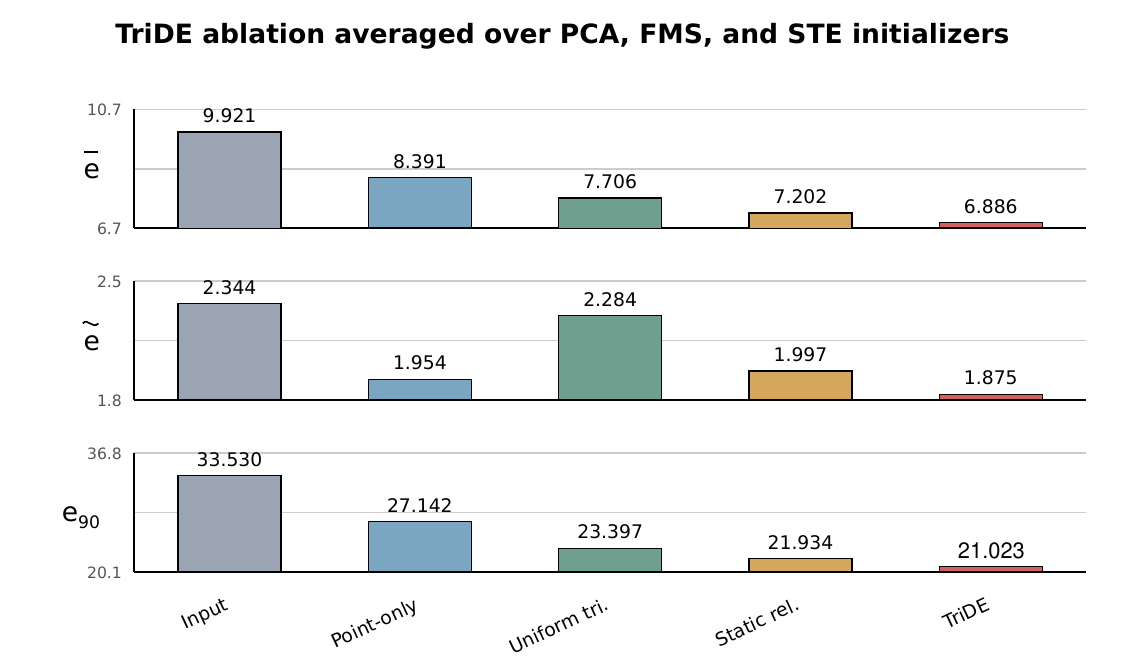}
    \caption{Ablation of candidate selection, triangle scoring, and measurement-based weighting. Local candidate selection gives a large initial gain; triangle scoring and reliability weighting further reduce mean and tail errors; dynamic measurement refresh gives the best aggregate values.}
    \label{fig:ablation}
\end{figure}
\subsection{Ablation}

The ablation studies in Figure~\ref{fig:ablation} compare \textsc{Input}, \textsc{Point-only}, \textsc{Uniform triangle}, \textsc{Static reliability}, and full \textsc{TriDE}. \textsc{Point-only} is the RANSAC-style edge-local ablation, using the same candidates but no triangle verification; \textsc{uniform triangle} adds coplanarity with equal triangle weights; \textsc{static reliability} fixes the initial point-measurement weights; full \TriDE{} refreshes reliability from the original point measurements after each sweep. The upper panel refers to the mean error; the middle panel refers to the median error; and the bottom panel refers to the 90th percentile error.

Averaged over PCA, FMS, and STE, errors improve from $(\eMean{},\eMed{},\eP{})=(9.921,2.344,33.530)$ for the input to $(6.886,1.875,21.023)$ for full \TriDE{}. The sequence shows that local candidate generation alone helps, triangle consistency further reduces mean and tail errors, reliability weighting helps further, and dynamic refresh gives the best overall profile.

\subsection{Synthetic Data Keypoint-corruption Stress Test}
\label{app:synthetic-corruption}
The synthetic data experiment in Figure \ref{fig:synthetic} uses 12 randomly generated cameras locations. For each edge, we randomly generate 80 keypoint matches per edge. We i.i.d. corrupt $x\%$ edges ($0\leq x\leq 70$). For each corrupted edge, we corrupt 80\% of its keypoint matches. Dashed curves in Figure~\ref{fig:synthetic} show the plain initializers, and solid curves apply \TriDE{} to the corresponding initializer. Shaded bands indicate $\pm 1$ standard deviation over random 5 seeds.

\begin{figure}[H]\label{fig:syn}
    \centering
    \includegraphics[width=1\linewidth]{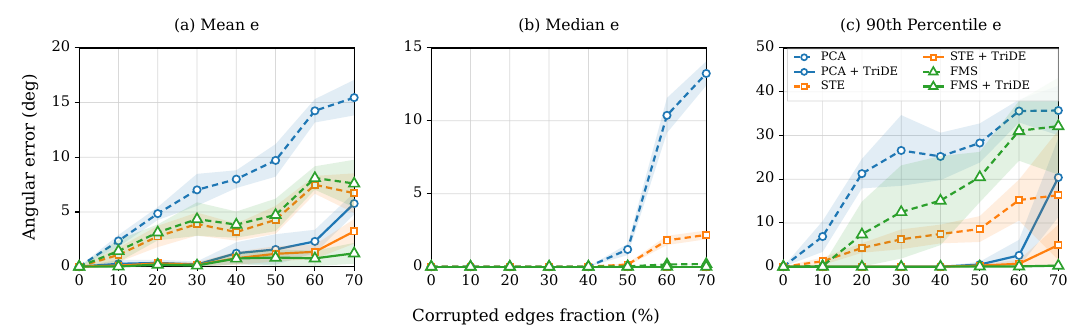}
    \caption{Synthetic keypoint-corruption stress test. \TriDE{} reduces error growth under corruption, especially for mean and 90th-percentile angular errors.}
    \label{fig:synthetic}
\end{figure}

The curves match the dataset trend: medians stay small at moderate corruption, while mean and 90th-percentile errors separate. At high corrupted-edge fractions, edge-local baselines develop large tails and \TriDE{} remains much lower. With $\leq 30\%$ of corruption, \TriDE{} successfully corrected almost all of the corrupted directions.

\section{Conclusion and Limitations}

We presented \TriDE{}, a graph-preserving message-passing estimator for pairwise translation directions.  The key idea is to jointly refine edge directions through triangle consistency while keeping the update efficient: each edge passes and receives reliability information through closed triples, selects candidates supported by both local correspondences and neighboring measurements, and refreshes its reliability from the original data.  This enables weak edges to be repaired once enough clean triangle witnesses and candidate recall are available.  Experiments show that \TriDE{} consistently improves direction estimation and generally improves aggregate downstream location recovery, especially in mean and tail error metrics.  Our method also has limitations.  It relies on sufficiently many closed triples, so very sparse or chain-like sensing regimes may require additional cues such as loop closures, multi-view tracks, temporal or inertial information, lines, or planes.  In addition, our current theory is idealized: it explains the observed phase transition under a structured recoverability model, but it does not yet cover fully adversarial corruptions where wrong directions can be arbitrary and potentially self-consistent.  Extending the analysis to such adversarial settings is an important direction for future work.

\section*{Acknowledgment}
This work is supported by the NSF award DMS-2514152.

\bibliographystyle{unsrtnat}
\bibliography{references}

@inproceedings{martinec2007robust,
  title={Robust rotation and translation estimation in multiview reconstruction},
  author={Martinec, Daniel and Pajdla, Tomas},
  booktitle={2007 IEEE conference on computer vision and pattern recognition},
  pages={1--8},
  year={2007},
  organization={IEEE}
}

@inproceedings{cui2015global,
  title={Global structure-from-motion by similarity averaging},
  author={Cui, Zhaopeng and Tan, Ping},
  booktitle={Proceedings of the IEEE international conference on computer vision},
  pages={864--872},
  year={2015}
}

@inproceedings{schonberger2016structure,
  title={Structure-from-motion revisited},
  author={Schonberger, Johannes L and Frahm, Jan-Michael},
  booktitle={Proceedings of the IEEE conference on computer vision and pattern recognition},
  pages={4104--4113},
  year={2016}
}

@inproceedings{pan2024global,
  title={Global structure-from-motion revisited},
  author={Pan, Linfei and Bar{\'a}th, D{\'a}niel and Pollefeys, Marc and Sch{\"o}nberger, Johannes L},
  booktitle={European Conference on Computer Vision},
  pages={58--77},
  year={2024},
  organization={Springer}
}

@book{hartley2003multiple,
  title={Multiple view geometry in computer vision},
  author={Hartley, Richard and Zisserman, Andrew},
  year={2003},
  publisher={Cambridge university press}
}

@article{hartley1997defense,
  title={In defense of the eight-point algorithm},
  author={Hartley, Richard I},
  journal={IEEE Transactions on pattern analysis and machine intelligence},
  volume={19},
  number={6},
  pages={580--593},
  year={1997},
  publisher={IEEE}
}

@article{nister2004efficient,
  title={An efficient solution to the five-point relative pose problem},
  author={Nist{\'e}r, David},
  journal={IEEE transactions on pattern analysis and machine intelligence},
  volume={26},
  number={6},
  pages={756--770},
  year={2004},
  publisher={IEEE}
}

@inproceedings{ozyesil2015robust,
  title={Robust camera location estimation by convex programming},
  author={Ozyesil, Onur and Singer, Amit},
  booktitle={Proceedings of the IEEE Conference on Computer Vision and Pattern Recognition},
  pages={2674--2683},
  year={2015}
}

@article{lerman2018fast,
  title={Fast, robust and non-convex subspace recovery},
  author={Lerman, Gilad and Maunu, Tyler},
  journal={Information and Inference: A Journal of the IMA},
  volume={7},
  number={2},
  pages={277--336},
  year={2018},
  publisher={Oxford University Press}
}

@inproceedings{yu2024subspace,
  title={A subspace-constrained Tyler's estimator and its applications to structure from motion},
  author={Yu, Feng and Zhang, Teng and Lerman, Gilad},
  booktitle={Proceedings of the IEEE/CVF Conference on Computer Vision and Pattern Recognition},
  pages={14575--14584},
  year={2024}
}

@inproceedings{wilson2014robust,
  title={Robust global translations with 1dsfm},
  author={Wilson, Kyle and Snavely, Noah},
  booktitle={European conference on computer vision},
  pages={61--75},
  year={2014},
  organization={Springer}
}

@inproceedings{shi2018estimation,
  title={Estimation of camera locations in highly corrupted scenarios: All about that base, no shape trouble},
  author={Shi, Yunpeng and Lerman, Gilad},
  booktitle={Proceedings of the IEEE Conference on Computer Vision and Pattern Recognition},
  pages={2868--2876},
  year={2018}
}

@inproceedings{shah2018view,
  title={View-graph selection framework for sfm},
  author={Shah, Rajvi and Chari, Visesh and Narayanan, PJ},
  booktitle={Proceedings of the European Conference on Computer Vision (ECCV)},
  pages={535--550},
  year={2018}
}

@inproceedings{manam2022correspondence,
  title={Correspondence reweighted translation averaging},
  author={Manam, Lalit and Govindu, Venu Madhav},
  booktitle={European Conference on Computer Vision},
  pages={56--72},
  year={2022},
  organization={Springer}
}

@article{fischler1981random,
  title={Random sample consensus: a paradigm for model fitting with applications to image analysis and automated cartography},
  author={Fischler, Martin A and Bolles, Robert C},
  journal={Communications of the ACM},
  volume={24},
  number={6},
  pages={381--395},
  year={1981},
  publisher={ACM New York, NY, USA}
}

@inproceedings{barath2019magsac,
  title={MAGSAC: Marginalizing sample consensus},
  author={Barath, Daniel and Matas, Jiri and Noskova, Jana},
  booktitle={Proceedings of the IEEE/CVF conference on computer vision and pattern recognition},
  pages={10197--10205},
  year={2019}
}

@inproceedings{barath2020magsac++,
  title={MAGSAC++, a fast, reliable and accurate robust estimator},
  author={Barath, Daniel and Noskova, Jana and Ivashechkin, Maksym and Matas, Jiri},
  booktitle={Proceedings of the IEEE/CVF conference on computer vision and pattern recognition},
  pages={1304--1312},
  year={2020}
}

@article{lerman2018exact,
  title={Exact camera location recovery by least unsquared deviations},
  author={Lerman, Gilad and Shi, Yunpeng and Zhang, Teng},
  journal={SIAM Journal on Imaging Sciences},
  volume={11},
  number={4},
  pages={2692--2721},
  year={2018},
  publisher={SIAM}
}

@inproceedings{licycle,
  title={Cycle-Sync: Robust Global Camera Pose Estimation through Enhanced Cycle-Consistent Synchronization},
  author={Li, Shaohan and Shi, Yunpeng and Lerman, Gilad},
  booktitle={The Thirty-ninth Annual Conference on Neural Information Processing Systems},
  year={2025}
}

@article{levenberg1944method,
  title={A method for the solution of certain non-linear problems in least squares},
  author={Levenberg, Kenneth},
  journal={Quarterly of applied mathematics},
  volume={2},
  number={2},
  pages={164--168},
  year={1944}
}

@article{marquardt1963algorithm,
  title={An algorithm for least-squares estimation of nonlinear parameters},
  author={Marquardt, Donald W},
  journal={Journal of the society for Industrial and Applied Mathematics},
  volume={11},
  number={2},
  pages={431--441},
  year={1963},
  publisher={SIAM}
}

@inproceedings{chatterjee2013efficient,
  title={Efficient and robust large-scale rotation averaging},
  author={Chatterjee, Avishek and Govindu, Venu Madhav},
  booktitle={Proceedings of the IEEE international conference on computer vision},
  pages={521--528},
  year={2013}
}

@inproceedings{shi2020message,
  title={Message passing least squares framework and its application to rotation synchronization},
  author={Shi, Yunpeng and Lerman, Gilad},
  booktitle={International conference on machine learning},
  pages={8796--8806},
  year={2020},
  organization={PMLR}
}

@inproceedings{schops2017multi,
  title={A multi-view stereo benchmark with high-resolution images and multi-camera videos},
  author={Schops, Thomas and Schonberger, Johannes L and Galliani, Silvano and Sattler, Torsten and Schindler, Konrad and Pollefeys, Marc and Geiger, Andreas},
  booktitle={Proceedings of the IEEE conference on computer vision and pattern recognition},
  pages={3260--3269},
  year={2017}
}

@article{lerman2022robust,
  title={Robust group synchronization via cycle-edge message passing},
  author={Lerman, Gilad and Shi, Yunpeng},
  journal={Foundations of Computational Mathematics},
  volume={22},
  number={6},
  pages={1665--1741},
  year={2022},
  publisher={Springer}
}

@inproceedings{shi2022robust,
  title={Robust group synchronization via quadratic programming},
  author={Shi, Yunpeng and Wyeth, Cole M and Lerman, Gilad},
  booktitle={International Conference on Machine Learning},
  pages={20095--20105},
  year={2022},
  organization={PMLR}
}

@inproceedings{zhuang2018baseline,
  title={Baseline desensitizing in translation averaging},
  author={Zhuang, Bingbing and Cheong, Loong-Fah and Lee, Gim Hee},
  booktitle={Proceedings of the IEEE Conference on Computer Vision and Pattern Recognition},
  pages={4539--4547},
  year={2018}
}

@inproceedings{manam2024fusing,
  title={Fusing directions and displacements in translation averaging},
  author={Manam, Lalit and Govindu, Venu Madhav},
  booktitle={2024 International Conference on 3D Vision (3DV)},
  pages={75--84},
  year={2024},
  organization={IEEE}
}
\clearpage
\appendix

\section{Diagnostic Gauss-Newton and Levenberg-Marquardt Implementations}
\label{app:gn-lm}

This appendix records the determinant-enforcement route used only for diagnostics. The triangle relation can be imposed directly through a nonlinear constrained objective,
\begin{equation}
    \min_{\{\mathbf{g}_e\in\Sphere\}_{e\in E}} \sum_{e\in E}\sum_r \psi\!\left(r_{e,r}(\mathbf{g}_e)\right)
    \quad\text{s.t.}\quad
    \det(\mathbf{g}_{ij},\mathbf{g}_{jk},\mathbf{g}_{ik})=0\quad\forall (i,j,k)\in\mathcal{T},
    \label{eq:appendix-global-nonconvex}
\end{equation}
where $\psi$ is a robust point-residual penalty. Because the variables lie on a product of spheres and the determinant constraints are multilinear, this global route is nonconvex and couples many edges at once. We therefore use it only for diagnostic baselines, while \TriDE{} performs bounded per-edge candidate selection.

The two diagnostics are a constrained Gauss--Newton projection, denoted \textsc{GN}, and a damped Levenberg--Marquardt least-squares variant, denoted \textsc{LM}. Both rebuild local tangent bases and retract updated directions after each step.

Let the current unit direction on edge $e$ be $\mathbf{g}_e\in\Sphere$. Choose an orthonormal tangent-basis matrix $\mathbf{U}_e\in\R^{3\times 2}$ with $\mathbf{U}_e^\top \mathbf{g}_e=\mathbf{0}$, and write the tangent-increment vector as $\mathbf{z}_e\in\R^2$. After any step, directions are retracted by normalization,
\begin{equation}
    \mathbf{g}_e^{+}=\frac{\mathbf{g}_e+\mathbf{U}_e\mathbf{z}_e}{\norm{\mathbf{g}_e+\mathbf{U}_e\mathbf{z}_e}} .
    \label{eq:appendix-retraction}
\end{equation}
Let $\mathcal{G}=\{\mathbf{g}_e\}_{e\in E}$ denote the current direction collection. For a graph triangle $\tau=(a,b,c)$, define the scalar determinant residual
\begin{equation}
    d_\tau(\mathcal{G})=\mathbf{g}_a^\top(\mathbf{g}_b\times \mathbf{g}_c)\in\R .
    \label{eq:appendix-det-residual}
\end{equation}
Its first-order tangent Jacobian has three nonzero row-matrix blocks in $\R^{1\times 2}$,
\begin{equation}
\begin{aligned}
    \mathbf{C}_{\tau,a} &= (\mathbf{g}_b\times \mathbf{g}_c)^\top \mathbf{U}_a,\\
    \mathbf{C}_{\tau,b} &= (\mathbf{g}_c\times \mathbf{g}_a)^\top \mathbf{U}_b,\\
    \mathbf{C}_{\tau,c} &= (\mathbf{g}_a\times \mathbf{g}_b)^\top \mathbf{U}_c .
\end{aligned}
\label{eq:appendix-det-jacobian}
\end{equation}
Stacking all valid triangle rows gives the linearization $\mathbf{d}(\mathcal{G}+\mathbf{U}\mathbf{z})\approx \mathbf{d}+\mathbf{C}\mathbf{z}$, where $\mathbf{U}\in\R^{3m\times 2m}$ is the block-diagonal tangent-basis matrix, $\mathbf{d}\in\R^{n_{\mathrm{row}}}$ is the stacked residual vector, $\mathbf{C}\in\R^{n_{\mathrm{row}}\times 2m}$ is the stacked Jacobian matrix, and $\mathbf{z}\in\R^{2m}$ is the stacked tangent-increment vector. A row is retained only when
\begin{equation}
    \gamma_\tau=\bigl(\norm{\mathbf{g}_b\times \mathbf{g}_c}\,\norm{\mathbf{g}_c\times \mathbf{g}_a}\,\norm{\mathbf{g}_a\times \mathbf{g}_b}\bigr)^{1/3}>a_{\min},
    \label{eq:appendix-degeneracy}
\end{equation}
which removes nearly degenerate determinant constraints.

\paragraph{Constrained Gauss--Newton.}
The \textsc{GN} diagnostic is the stripped determinant-only projection used to test the hard-constraint route. It solves the minimum-norm tangent correction satisfying the linearized determinant equations,
\begin{equation}
    \min_{\mathbf{z}\in\R^{2m}} \frac{1}{2}\norm{\mathbf{z}}^2
    \quad\text{s.t.}\quad \mathbf{C}\mathbf{z}+\mathbf{d}=\mathbf{0} .
    \label{eq:appendix-gn-minnorm}
\end{equation}
The corresponding KKT system is
\begin{equation}
    \begin{bmatrix}
        \mathbf{I} & \mathbf{C}^\top\\
        \mathbf{C} & -\rho \mathbf{I}
    \end{bmatrix}
    \begin{bmatrix}
        \mathbf{z}\\ \boldsymbol{\lambda}
    \end{bmatrix}
    =-
    \begin{bmatrix}
        \mathbf{0}\\ \mathbf{d}
    \end{bmatrix},
    \label{eq:appendix-gn-kkt}
\end{equation}
where $\mathbf{I}$ is an identity matrix of the required size, $\boldsymbol{\lambda}\in\R^{n_{\mathrm{row}}}$ is the multiplier vector, and a tiny scalar $\rho\ge 0$ regularizes redundant or nearly dependent constraint rows. The solution is applied through Eq.~\eqref{eq:appendix-retraction}, and the linearization is rebuilt at the next iterate.

\paragraph{Levenberg--Marquardt diagnostic.}
The \textsc{LM} diagnostic treats point orthogonality and triangle consistency as one weighted nonlinear least-squares problem~\citep{levenberg1944method,marquardt1963algorithm}. For each edge, the point set is the small subset of local correspondence normals with the smallest current $\abs{\mathbf{g}_e^\top \mathbf{x}_{e,r}}$. For such a selected normal, let
\begin{equation}
    p_{e,r}(\mathcal{G})=\mathbf{g}_e^\top \mathbf{x}_{e,r}\in\R,
    \qquad
    \mathbf{A}_{e,r}=\mathbf{x}_{e,r}^\top \mathbf{U}_e\in\R^{1\times 2} .
\end{equation}
Point rows use Cauchy-type robust weights of the form
\begin{equation}
    w_{e,r}=\frac{1}{\sigma_e^2}\left(1+\left(\frac{p_{e,r}}{c\sigma_e}\right)^2\right)^{-1},
    \label{eq:appendix-point-weight}
\end{equation}
where $\sigma_e$ is a robust residual scale for the selected edge measurements. For triangle rows, use $d_\tau$ and the row matrix $\mathbf{C}_\tau\in\R^{1\times 2m}$ obtained by inserting the nonzero blocks from Eqs.~\eqref{eq:appendix-det-residual}--\eqref{eq:appendix-det-jacobian}. The local quadratic model is
\begin{equation}
\begin{aligned}
    \min_{\mathbf{z}}\;&\frac{1}{2}\sum_{e,r} w_{e,r}\bigl(p_{e,r}+\mathbf{A}_{e,r}\mathbf{z}_e\bigr)^2 \\
    &+\frac{\lambda_{\mathrm{tri}}}{2}\sum_{\tau} \omega_\tau\bigl(d_\tau+\mathbf{C}_\tau \mathbf{z}\bigr)^2 .
\end{aligned}
\label{eq:appendix-lm-model}
\end{equation}
The triangle weight is computed from the three incident edge reliabilities. For this diagnostic only, let $s_e$ denote the mean of the small absolute point residuals on edge $e$; the reliability is
\begin{equation}
    q_e=\exp(-\beta s_e),
    \qquad
    \omega_{(a,b,c)}\propto q_aq_bq_c,
    \label{eq:appendix-tri-weight}
\end{equation}
followed by normalization across the retained triangle rows. Thus triangles supported by locally reliable edges receive larger weight.

Writing Eq.~\eqref{eq:appendix-lm-model} as $\frac12 \mathbf{z}^\top \mathbf{H}\mathbf{z}+\mathbf{b}^\top \mathbf{z}+\mathrm{const}$, with $\mathbf{H}\in\R^{2m\times 2m}$ and $\mathbf{b}\in\R^{2m}$, gives the damped LM step
\begin{equation}
    \left(\mathbf{H}+\mu\,\mathrm{diag}(\mathbf{H})\right)\mathbf{z}=-\mathbf{b} .
    \label{eq:appendix-lm-step}
\end{equation}
The damping $\mu$ is increased after rejected steps and decreased after accepted objective decreases. The accepted step is retracted by Eq.~\eqref{eq:appendix-retraction}. Thus \textsc{LM} is a continuous, coupled baseline, while \TriDE{} remains a bounded candidate-selection method.

\section{Hyperparameter Sensitivity}
\label{app:sensitivity}
We vary $n_{\mathrm{cand}}$, $\beta$, and $k_{\max}$ one at a time on the PCA direction benchmark, using the same scene-equal averaging as in the main text. The triangle-validity threshold remains fixed because it is a degeneracy safeguard rather than a performance knob. The corresponding sensitivity curves are shown in Figures~\ref{fig:sensitivity-B}--\ref{fig:sensitivity-K}.

\begin{figure}[H]
    \centering
    \includegraphics[width=0.98\linewidth]{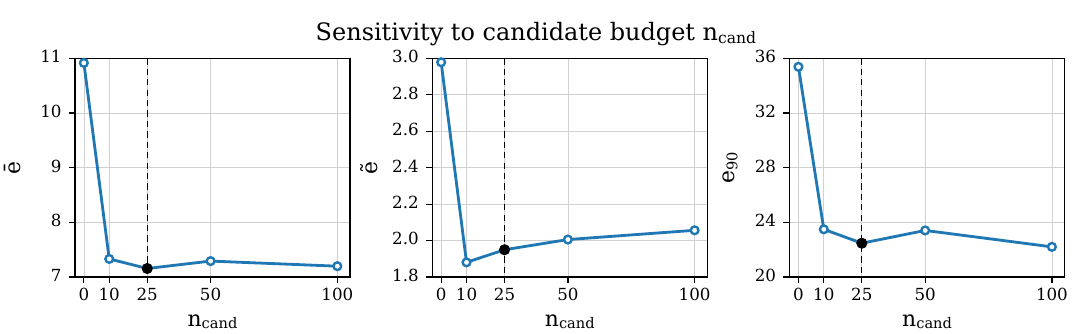}
    \caption{Sensitivity to candidate budget $n_{\mathrm{cand}}$. Accuracy improves from very small pools to moderate budgets and is stable around the default $n_{\mathrm{cand}}=25$.}
    \label{fig:sensitivity-B}
\end{figure}

\begin{figure}[H]
    \centering
    \includegraphics[width=0.98\linewidth]{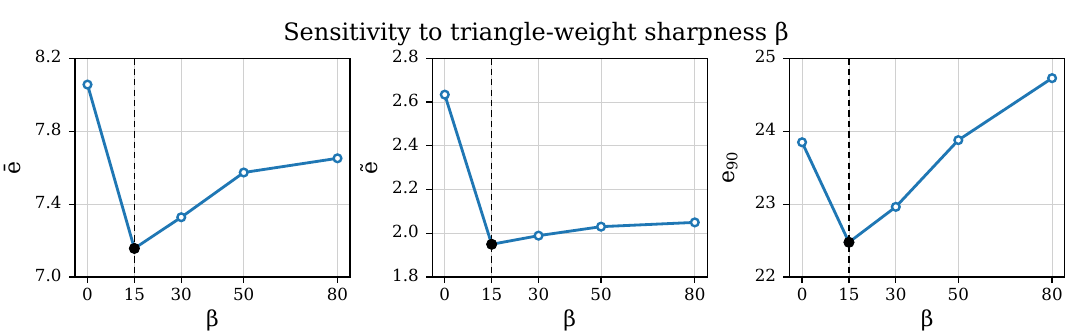}
    \caption{Sensitivity to triangle-weight sharpness $\beta$. The default $\beta=15$ lies in a broad plateau rather than at a narrow optimum.}
    \label{fig:sensitivity-beta}
\end{figure}

\begin{figure}[H]
    \centering
    \includegraphics[width=0.98\linewidth]{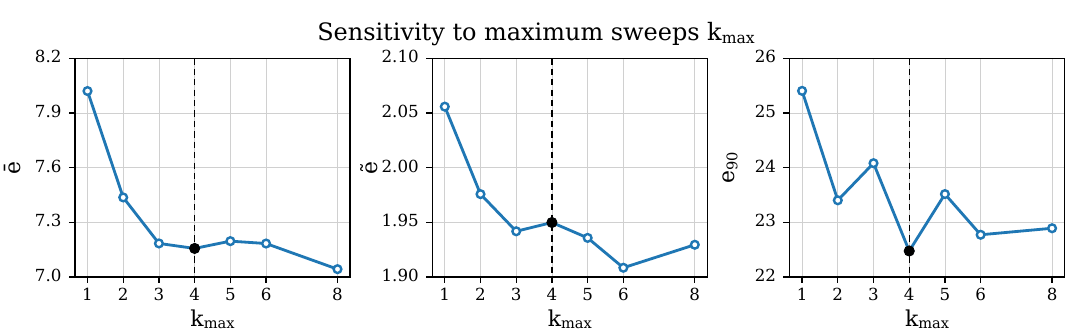}
    \caption{Sensitivity to estimation sweeps $k_{\max}$. Most gains appear in the first few sweeps, and the default $k_{\max}=4$ is on the stable plateau.}
    \label{fig:sensitivity-K}
\end{figure}

The three sensitivity studies support the same qualitative conclusion: moderate changes in the main hyperparameters preserve the direction-error profile, so the reported setting is not a finely tuned isolated optimum.

\section{View-graph and Triangle Statistics}
\label{app:graph-stats}
Table~\ref{tab:graph-stats} gives the graph sizes used in Section~\ref{sec:complexity}. Across all 11 scenes, the caches contain 2,228 edges and 9,425 triangles; the pooled median incident-triangle count is 11, and only one edge has no triangle.
\begin{table}[H]
\centering
\small
\setlength{\tabcolsep}{4.0pt}
\caption{View-graph and triangle-context statistics for the cached ETH3D scenes. $d_e$ is the number of incident graph triangles for an edge, and $N_e$ is the number of stored correspondence normals on that edge. The reported protocol uses the full cached triangle set.}
\label{tab:graph-stats}
\begin{tabular}{lrrrrrr}
\toprule
Scene & $|V|$ & $|E|$ & $|\mathcal{T}|$ & median $d_e$ & edges in triangles & median $N_e$ \\
\midrule
courtyard & 38 & 442 & 2977 & 23 & 100.0\% & 618.0 \\
delivery\_area & 39 & 256 & 799 & 9 & 100.0\% & 684.5 \\
electro & 39 & 314 & 1231 & 12 & 100.0\% & 409.0 \\
kicker & 30 & 251 & 1163 & 15 & 100.0\% & 530.0 \\
meadow & 14 & 55 & 94 & 5 & 98.2\% & 132.0 \\
office & 21 & 128 & 402 & 10 & 100.0\% & 115.5 \\
pipes & 14 & 60 & 124 & 6 & 100.0\% & 248.0 \\
relief & 13 & 78 & 286 & 11 & 100.0\% & 1698.0 \\
relief\_2 & 20 & 150 & 593 & 12 & 100.0\% & 403.0 \\
terrace & 23 & 131 & 357 & 9 & 100.0\% & 581.0 \\
terrains & 42 & 363 & 1399 & 11 & 100.0\% & 287.0 \\
\midrule
Total / pooled & 293 & 2228 & 9425 & 11 & 99.96\% & 423.5 \\
\bottomrule
\end{tabular}
\end{table}

\section{Full Per-scene Direction Results}
{\scriptsize
\begin{table}[H]
\centering
\scriptsize
\setlength{\tabcolsep}{1.45pt}
\renewcommand{\arraystretch}{0.92}
\caption{Compact per-scene direction results for Init, GN, LM, and \TriDE{}. Lower is better; within each scene, initializer, and metric, bold indicates the best value among the four entries. GN and LM are the diagnostic global-determinant variants from Appendix~\ref{app:gn-lm}. The \textsc{RAN} group abbreviates a diagnostic with random directions and random initial weights.}
\label{tab:direction-per-scene-compact}
\resizebox{\textwidth}{!}{%
\begin{tabular}{l|ccc|ccc|ccc|ccc||ccc|ccc|ccc|ccc}
\toprule
Scene & \multicolumn{12}{c||}{\textsc{RAN}} & \multicolumn{12}{c}{PCA} \\
\cmidrule(lr){2-13}\cmidrule(lr){14-25}
 & \multicolumn{3}{c|}{Init} & \multicolumn{3}{c|}{GN} & \multicolumn{3}{c|}{LM} & \multicolumn{3}{c||}{\TriDE{}} & \multicolumn{3}{c|}{Init} & \multicolumn{3}{c|}{GN} & \multicolumn{3}{c|}{LM} & \multicolumn{3}{c}{\TriDE{}} \\
 & \eMean & \eMed & \eP & \eMean & \eMed & \eP & \eMean & \eMed & \eP & \eMean & \eMed & \eP & \eMean & \eMed & \eP & \eMean & \eMed & \eP & \eMean & \eMed & \eP & \eMean & \eMed & \eP \\
\midrule
courtyard & 57.477 & 60.424 & 84.442 & 57.501 & 60.947 & 84.345 & 55.554 & 58.624 & 82.797 & \textbf{14.924} & \textbf{2.557} & \textbf{55.615} & 19.258 & 3.559 & 63.277 & 19.128 & 3.402 & 61.591 & 19.258 & 3.559 & 63.078 & \textbf{12.815} & \textbf{2.174} & \textbf{48.635} \\
delivery\_area & 58.344 & 61.406 & 84.364 & 58.317 & 61.275 & 84.646 & 55.214 & 57.545 & 81.950 & \textbf{10.685} & \textbf{2.473} & \textbf{35.067} & 11.780 & 2.444 & 39.032 & 11.709 & 2.364 & 39.034 & 11.811 & 2.354 & 39.515 & \textbf{10.023} & \textbf{1.375} & \textbf{33.919} \\
electro & 56.962 & 60.471 & 84.079 & 56.974 & 60.661 & 83.956 & 54.953 & 57.213 & 82.494 & \textbf{13.795} & \textbf{3.601} & \textbf{45.010} & 14.779 & 3.709 & 52.913 & 14.540 & 3.348 & 52.612 & 14.780 & 3.709 & 52.702 & \textbf{12.215} & \textbf{2.002} & \textbf{40.388} \\
kicker & 58.188 & 60.843 & 84.882 & 58.273 & 60.785 & 85.126 & 54.864 & 56.262 & 82.256 & \textbf{6.994} & \textbf{3.405} & \textbf{14.739} & 11.876 & 5.742 & 32.927 & 11.137 & 5.014 & 32.137 & 11.866 & 5.742 & 32.842 & \textbf{6.054} & \textbf{2.873} & \textbf{12.717} \\
meadow & 57.943 & 61.119 & 82.648 & 58.056 & 61.576 & 83.397 & 57.214 & 59.339 & 82.168 & \textbf{17.863} & \textbf{8.352} & \textbf{48.710} & 22.958 & 7.972 & 71.962 & 22.944 & 6.702 & 72.027 & 22.673 & 7.930 & 72.099 & \textbf{14.278} & \textbf{5.803} & \textbf{45.570} \\
office & 56.903 & 59.146 & 84.545 & 56.910 & 58.924 & 84.515 & 54.770 & 56.740 & 82.703 & \textbf{11.008} & \textbf{3.970} & \textbf{32.695} & 13.838 & 4.472 & 44.956 & 13.608 & 4.197 & 45.430 & 13.791 & 4.467 & 46.568 & \textbf{10.425} & \textbf{3.575} & \textbf{29.620} \\
pipes & 57.253 & 60.954 & 84.560 & 57.230 & 60.644 & 85.321 & 55.723 & 57.706 & 83.697 & \textbf{3.017} & \textbf{1.879} & \textbf{7.557} & 9.047 & 1.466 & 40.364 & 8.864 & 1.748 & 34.939 & 8.896 & 1.464 & 40.688 & \textbf{1.287} & \textbf{0.651} & \textbf{2.310} \\
relief & 57.399 & 60.044 & 84.320 & 56.604 & 60.222 & 83.601 & 53.180 & 55.038 & 80.044 & \textbf{0.624} & \textbf{0.376} & \textbf{1.333} & 0.228 & 0.205 & 0.448 & \textbf{0.210} & \textbf{0.194} & \textbf{0.389} & 0.229 & 0.211 & 0.445 & 0.229 & 0.206 & 0.426 \\
relief\_2 & 57.175 & 60.654 & 83.651 & 57.412 & 61.850 & 83.777 & 55.335 & 58.200 & 82.082 & \textbf{9.461} & \textbf{1.723} & \textbf{30.247} & 10.421 & 1.034 & 33.227 & 10.488 & 1.180 & 32.596 & 10.472 & 1.013 & 34.410 & \textbf{8.385} & \textbf{0.918} & \textbf{28.220} \\
terrace & 58.269 & 62.596 & 84.226 & 58.301 & 63.096 & 84.126 & 55.337 & 58.398 & 81.867 & \textbf{3.710} & \textbf{1.863} & \textbf{7.110} & 2.608 & 1.348 & 2.989 & 2.553 & \textbf{1.335} & \textbf{2.443} & 2.524 & 1.348 & 2.932 & \textbf{1.599} & 1.369 & 2.666 \\
terrains & 57.436 & 59.883 & 84.370 & 57.269 & 60.123 & 84.372 & 55.523 & 57.905 & 82.423 & \textbf{2.449} & \textbf{1.180} & \textbf{4.520} & 3.280 & 0.810 & 6.063 & 3.006 & 0.748 & 5.008 & 3.272 & 0.819 & 6.048 & \textbf{1.419} & \textbf{0.702} & \textbf{3.330} \\
\midrule
Scene & \multicolumn{12}{c||}{FMS} & \multicolumn{12}{c}{STE} \\
\cmidrule(lr){2-13}\cmidrule(lr){14-25}
 & \multicolumn{3}{c|}{Init} & \multicolumn{3}{c|}{GN} & \multicolumn{3}{c|}{LM} & \multicolumn{3}{c||}{\TriDE{}} & \multicolumn{3}{c|}{Init} & \multicolumn{3}{c|}{GN} & \multicolumn{3}{c|}{LM} & \multicolumn{3}{c}{\TriDE{}} \\
 & \eMean & \eMed & \eP & \eMean & \eMed & \eP & \eMean & \eMed & \eP & \eMean & \eMed & \eP & \eMean & \eMed & \eP & \eMean & \eMed & \eP & \eMean & \eMed & \eP & \eMean & \eMed & \eP \\
\midrule
courtyard & 18.582 & 2.632 & 62.733 & 18.473 & 2.391 & 62.729 & 18.575 & 2.616 & 62.769 & \textbf{12.677} & \textbf{2.133} & \textbf{48.997} & 16.643 & 2.171 & 59.674 & 16.548 & \textbf{2.136} & 59.040 & 16.640 & 2.171 & 59.513 & \textbf{12.472} & 2.137 & \textbf{48.349} \\
delivery\_area & 11.237 & 1.563 & 39.726 & 11.109 & 1.364 & 39.779 & 11.239 & 1.647 & 39.837 & \textbf{10.009} & \textbf{1.320} & \textbf{33.111} & 10.796 & 1.241 & 37.789 & 10.738 & \textbf{0.939} & 37.857 & 10.734 & 1.242 & 37.717 & \textbf{10.588} & 1.269 & \textbf{35.108} \\
electro & 14.267 & 2.574 & 60.709 & 14.205 & 2.492 & 60.429 & 14.268 & 2.559 & 61.459 & \textbf{12.129} & \textbf{1.859} & \textbf{40.065} & 13.497 & 1.869 & 50.791 & 13.363 & 2.025 & 50.846 & 13.480 & 1.869 & 50.704 & \textbf{12.309} & \textbf{1.761} & \textbf{42.571} \\
kicker & 8.720 & 4.098 & 22.263 & 7.995 & 3.723 & 18.689 & 8.693 & 4.135 & 21.617 & \textbf{6.099} & \textbf{2.785} & \textbf{12.699} & 8.134 & 3.388 & 19.954 & 7.418 & 2.953 & 17.793 & 8.145 & 3.386 & 19.992 & \textbf{5.913} & \textbf{2.748} & \textbf{12.612} \\
meadow & 18.576 & 5.262 & 68.416 & 18.461 & 5.351 & 68.344 & 18.591 & 5.183 & 68.342 & \textbf{11.221} & \textbf{5.118} & \textbf{23.675} & 11.687 & 5.004 & 48.258 & 11.014 & \textbf{4.189} & 48.266 & 11.650 & 4.965 & 47.974 & \textbf{9.340} & 4.895 & \textbf{16.600} \\
office & 12.183 & 3.756 & 38.599 & 12.228 & 4.139 & 38.488 & 12.266 & 3.752 & 38.633 & \textbf{10.271} & \textbf{3.549} & \textbf{28.448} & 10.675 & \textbf{3.475} & 36.080 & 11.006 & 3.962 & 35.220 & 10.714 & 3.479 & 36.089 & \textbf{10.238} & 3.478 & \textbf{27.820} \\
pipes & 8.765 & 0.849 & 35.850 & 9.055 & 1.476 & 35.682 & 8.653 & 0.865 & 35.938 & \textbf{1.206} & \textbf{0.669} & \textbf{2.289} & 4.709 & 0.630 & 3.373 & 4.922 & 0.939 & 3.381 & 4.580 & \textbf{0.626} & 3.367 & \textbf{0.946} & 0.648 & \textbf{2.363} \\
relief & 0.227 & 0.203 & 0.417 & \textbf{0.213} & \textbf{0.201} & \textbf{0.386} & 0.226 & 0.202 & 0.423 & 0.220 & 0.208 & 0.426 & 0.203 & 0.183 & 0.376 & \textbf{0.192} & \textbf{0.181} & \textbf{0.346} & 0.203 & 0.183 & 0.376 & 0.210 & 0.191 & 0.404 \\
relief\_2 & 11.569 & 0.895 & 45.852 & 11.826 & 1.591 & 43.344 & 11.566 & 0.894 & 45.941 & \textbf{8.174} & \textbf{0.849} & \textbf{26.638} & 17.138 & 0.864 & 73.972 & 18.985 & 5.532 & 72.856 & 17.134 & \textbf{0.861} & 73.881 & \textbf{8.456} & 0.864 & \textbf{30.051} \\
terrace & 2.595 & 1.319 & 2.723 & 2.568 & \textbf{1.267} & \textbf{2.333} & 2.578 & 1.319 & 2.670 & \textbf{1.642} & 1.364 & 2.740 & 2.486 & 1.319 & 2.679 & 2.450 & \textbf{1.243} & \textbf{2.387} & 2.439 & 1.318 & 2.692 & \textbf{1.584} & 1.339 & 2.637 \\
terrains & 2.645 & \textbf{0.654} & 4.156 & 2.356 & 0.717 & \textbf{2.728} & 2.636 & \textbf{0.654} & 4.034 & \textbf{1.410} & 0.688 & 3.354 & 1.977 & 0.660 & 3.742 & 1.663 & 0.730 & \textbf{2.497} & 1.982 & \textbf{0.659} & 3.743 & \textbf{1.406} & 0.684 & 3.353 \\
\bottomrule
\end{tabular}%
}
\end{table}

}
\clearpage
\ifdefined\trideIncludePerSceneLocation
\section{Full Per-scene Location Results}
{\scriptsize
\begingroup
\centering
\scriptsize
\setlength{\tabcolsep}{1.3pt}
\renewcommand{\arraystretch}{0.90}
\begin{longtable}{l|ccc|ccc|ccc|ccc|ccc|ccc}
\caption{Compact per-scene location results under LUD and CycleSync, shown as stacked solver panels. Results are averaged over seeds. Bold marks the best variant within each upstream family for each scene, solver, and metric. Dashes indicate all-seed failure.}\label{tab:location-per-scene-compact}\\
\toprule
\endfirsthead
\caption[]{Compact per-scene location results  (continued).}\\
\toprule
\endhead
\midrule
\multicolumn{19}{r}{\emph{continued on next page}}\\
\endfoot
\bottomrule
\endlastfoot
\multicolumn{19}{l}{\emph{STE upstream directions, LUD}} \\
\midrule
Scene & \multicolumn{3}{c|}{STE} & \multicolumn{3}{c|}{STE + 1DSfM} & \multicolumn{3}{c|}{STE + IR-AAB} & \multicolumn{3}{c|}{STE + GN} & \multicolumn{3}{c|}{STE + LM} & \multicolumn{3}{c}{STE + \TriDE{}} \\
   & \tMean & \tMed & \tP & \tMean & \tMed & \tP & \tMean & \tMed & \tP & \tMean & \tMed & \tP & \tMean & \tMed & \tP & \tMean & \tMed & \tP \\
\midrule
courtyard & 0.738 & 0.404 & 1.656 & 0.568 & 0.261 & 1.387 & \textbf{0.401} & \textbf{0.055} & \textbf{1.145} & 0.714 & 0.361 & 1.643 & 0.734 & 0.383 & 1.680 & 0.534 & 0.239 & 1.188 \\
delivery area & 0.301 & 0.186 & 0.513 & 0.340 & 0.219 & 0.672 & \textbf{0.145} & \textbf{0.044} & \textbf{0.281} & 0.314 & 0.205 & 0.437 & 0.327 & 0.208 & 0.566 & 0.196 & 0.108 & 0.299 \\
electro & 0.278 & 0.086 & 1.065 & 0.237 & 0.042 & 1.051 & \textbf{0.208} & \textbf{0.029} & 0.954 & 0.274 & 0.082 & 1.112 & 0.283 & 0.102 & 1.102 & 0.226 & 0.049 & \textbf{0.915} \\
kicker & 0.078 & 0.052 & 0.095 & 0.078 & 0.051 & 0.096 & \textbf{0.052} & \textbf{0.046} & 0.089 & 0.074 & 0.052 & 0.094 & 0.069 & 0.052 & 0.095 & 0.053 & 0.048 & \textbf{0.088} \\
meadow & 0.235 & 0.148 & 0.558 & 0.285 & \textbf{0.118} & 0.960 & 0.184 & 0.130 & 0.400 & 0.468 & 0.250 & 1.398 & 0.231 & 0.144 & 0.555 & \textbf{0.168} & 0.127 & \textbf{0.396} \\
office & 0.228 & 0.104 & 0.182 & 0.228 & 0.104 & 0.188 & 0.564 & 0.360 & 1.132 & 0.247 & 0.109 & 0.182 & 0.229 & 0.104 & 0.192 & \textbf{0.221} & \textbf{0.096} & \textbf{0.164} \\
pipes & 0.013 & 0.012 & 0.021 & 0.017 & 0.013 & 0.042 & \textbf{0.010} & \textbf{0.008} & \textbf{0.016} & 0.023 & 0.025 & 0.036 & 0.013 & 0.013 & 0.019 & 0.019 & 0.011 & 0.039 \\
relief & \textbf{0.003} & 0.003 & \textbf{0.008} & \textbf{0.003} & 0.003 & \textbf{0.008} & \textbf{0.003} & 0.003 & \textbf{0.008} & -- & -- & -- & \textbf{0.003} & 0.003 & \textbf{0.008} & \textbf{0.003} & \textbf{0.002} & \textbf{0.008} \\
relief 2 & 0.196 & 0.135 & 0.390 & 0.203 & 0.120 & 0.459 & \textbf{0.059} & \textbf{0.034} & \textbf{0.141} & 0.196 & 0.179 & 0.393 & 0.220 & 0.174 & 0.406 & 0.079 & 0.043 & 0.168 \\
terrace & \textbf{0.026} & 0.024 & 0.042 & \textbf{0.026} & 0.025 & 0.041 & \textbf{0.026} & 0.025 & 0.041 & \textbf{0.026} & \textbf{0.023} & \textbf{0.040} & \textbf{0.026} & 0.024 & 0.044 & 0.032 & 0.029 & 0.062 \\
terrains & \textbf{0.025} & \textbf{0.013} & 0.051 & \textbf{0.025} & \textbf{0.013} & 0.050 & \textbf{0.025} & \textbf{0.013} & 0.050 & \textbf{0.025} & 0.018 & \textbf{0.036} & \textbf{0.025} & \textbf{0.013} & 0.051 & 0.026 & \textbf{0.013} & 0.051 \\
\midrule
\multicolumn{19}{l}{\emph{STE upstream directions, CycleSync}} \\
\midrule
Scene & \multicolumn{3}{c|}{STE} & \multicolumn{3}{c|}{STE + 1DSfM} & \multicolumn{3}{c|}{STE + IR-AAB} & \multicolumn{3}{c|}{STE + GN} & \multicolumn{3}{c|}{STE + LM} & \multicolumn{3}{c}{STE + \TriDE{}} \\
   & \tMean & \tMed & \tP & \tMean & \tMed & \tP & \tMean & \tMed & \tP & \tMean & \tMed & \tP & \tMean & \tMed & \tP & \tMean & \tMed & \tP \\
\midrule
courtyard & 0.235 & 0.039 & 0.764 & 1.002 & 0.840 & 1.907 & 0.345 & 0.038 & 1.186 & 0.264 & 0.037 & 0.931 & 0.238 & 0.036 & 0.846 & \textbf{0.200} & \textbf{0.035} & \textbf{0.576} \\
delivery area & 0.151 & 0.048 & 0.292 & -- & -- & -- & \textbf{0.148} & 0.047 & \textbf{0.277} & 0.153 & \textbf{0.040} & 0.310 & 0.153 & 0.049 & 0.295 & 0.156 & 0.060 & 0.295 \\
electro & 0.208 & \textbf{0.031} & 0.961 & 0.209 & \textbf{0.031} & 0.957 & 0.206 & \textbf{0.031} & 0.954 & 0.208 & 0.032 & 0.971 & 0.207 & \textbf{0.031} & 0.961 & \textbf{0.200} & 0.034 & \textbf{0.862} \\
kicker & 0.052 & 0.046 & 0.089 & 0.052 & 0.046 & 0.089 & 0.051 & 0.046 & 0.089 & \textbf{0.047} & \textbf{0.040} & \textbf{0.082} & 0.051 & 0.046 & 0.090 & 0.053 & 0.047 & 0.089 \\
meadow & 0.142 & \textbf{0.080} & 0.280 & 0.164 & 0.098 & 0.398 & 0.142 & 0.080 & 0.280 & 0.132 & 0.114 & 0.230 & 0.159 & \textbf{0.080} & 0.347 & \textbf{0.078} & 0.083 & \textbf{0.136} \\
office & 0.221 & 0.097 & \textbf{0.154} & 0.221 & 0.097 & 0.156 & -- & -- & -- & 0.241 & 0.097 & 0.180 & 0.222 & 0.097 & 0.155 & \textbf{0.219} & \textbf{0.094} & 0.162 \\
pipes & \textbf{0.010} & \textbf{0.008} & 0.016 & \textbf{0.010} & \textbf{0.008} & 0.016 & \textbf{0.010} & \textbf{0.008} & \textbf{0.015} & 0.015 & 0.013 & 0.030 & \textbf{0.010} & \textbf{0.008} & 0.017 & 0.013 & 0.011 & 0.024 \\
relief & \textbf{0.003} & \textbf{0.002} & \textbf{0.008} & \textbf{0.003} & \textbf{0.002} & \textbf{0.008} & \textbf{0.003} & \textbf{0.002} & \textbf{0.008} & -- & -- & -- & \textbf{0.003} & \textbf{0.002} & \textbf{0.008} & \textbf{0.003} & 0.003 & \textbf{0.008} \\
relief 2 & 0.023 & \textbf{0.016} & 0.053 & 0.024 & \textbf{0.016} & 0.055 & 0.023 & 0.017 & 0.052 & 0.082 & 0.046 & 0.226 & \textbf{0.022} & 0.018 & \textbf{0.047} & 0.042 & 0.025 & 0.104 \\
terrace & \textbf{0.026} & 0.025 & 0.041 & \textbf{0.026} & 0.025 & 0.041 & \textbf{0.026} & 0.025 & 0.041 & \textbf{0.026} & \textbf{0.024} & \textbf{0.039} & \textbf{0.026} & 0.025 & 0.041 & 0.027 & 0.027 & 0.044 \\
terrains & 0.025 & \textbf{0.013} & 0.050 & 0.025 & \textbf{0.013} & 0.050 & 0.025 & \textbf{0.013} & 0.050 & \textbf{0.018} & 0.018 & \textbf{0.033} & 0.025 & \textbf{0.013} & 0.050 & 0.025 & \textbf{0.013} & 0.050 \\
\midrule
\multicolumn{19}{l}{\emph{PCA upstream directions, LUD}} \\
\midrule
Scene & \multicolumn{3}{c|}{PCA} & \multicolumn{3}{c|}{PCA + 1DSfM} & \multicolumn{3}{c|}{PCA + IR-AAB} & \multicolumn{3}{c|}{PCA + GN} & \multicolumn{3}{c|}{PCA + LM} & \multicolumn{3}{c}{PCA + \TriDE{}} \\
   & \tMean & \tMed & \tP & \tMean & \tMed & \tP & \tMean & \tMed & \tP & \tMean & \tMed & \tP & \tMean & \tMed & \tP & \tMean & \tMed & \tP \\
\midrule
courtyard & 0.789 & 0.625 & 1.454 & 0.750 & 0.484 & 1.599 & 0.757 & 0.342 & 1.848 & 0.812 & 0.596 & 1.480 & 0.789 & 0.625 & 1.454 & \textbf{0.551} & \textbf{0.262} & \textbf{1.200} \\
delivery area & 0.436 & 0.283 & 0.802 & 0.967 & 0.909 & 1.679 & 0.258 & 0.168 & 0.406 & 0.420 & 0.267 & 0.696 & 0.470 & 0.282 & 0.867 & \textbf{0.217} & \textbf{0.123} & \textbf{0.337} \\
electro & 0.411 & 0.187 & 1.178 & 0.367 & 0.145 & 1.114 & 0.378 & 0.167 & 1.137 & 0.415 & 0.220 & 1.273 & 0.402 & 0.189 & 1.150 & \textbf{0.243} & \textbf{0.058} & \textbf{0.957} \\
kicker & 0.193 & 0.103 & 0.534 & 0.228 & 0.107 & 0.559 & 0.179 & 0.109 & 0.445 & 0.192 & 0.095 & 0.454 & 0.181 & 0.100 & 0.436 & \textbf{0.053} & \textbf{0.046} & \textbf{0.091} \\
meadow & 0.618 & 0.489 & 1.423 & 0.666 & 0.449 & 1.660 & 0.525 & 0.433 & 1.088 & 0.655 & 0.447 & 1.475 & 0.486 & 0.404 & 1.133 & \textbf{0.252} & \textbf{0.211} & \textbf{0.521} \\
office & 0.316 & 0.154 & 0.940 & 0.304 & 0.152 & 0.942 & 0.266 & 0.130 & 0.729 & 0.317 & 0.149 & 0.926 & 0.309 & 0.158 & 0.843 & \textbf{0.208} & \textbf{0.102} & \textbf{0.247} \\
pipes & 0.098 & 0.097 & 0.185 & 0.125 & 0.114 & 0.239 & 0.068 & 0.049 & 0.153 & 0.108 & 0.097 & 0.193 & 0.088 & 0.084 & 0.171 & \textbf{0.027} & \textbf{0.013} & \textbf{0.075} \\
relief & \textbf{0.004} & 0.004 & 0.008 & \textbf{0.004} & 0.004 & 0.008 & 0.005 & 0.004 & 0.011 & \textbf{0.004} & 0.004 & \textbf{0.007} & \textbf{0.004} & 0.004 & 0.008 & \textbf{0.004} & \textbf{0.003} & 0.008 \\
relief 2 & 0.137 & 0.127 & 0.230 & 0.137 & 0.127 & 0.230 & 0.105 & 0.087 & \textbf{0.173} & 0.147 & 0.134 & 0.223 & 0.137 & 0.122 & 0.230 & \textbf{0.094} & \textbf{0.072} & 0.175 \\
terrace & 0.042 & 0.029 & 0.058 & 0.041 & 0.028 & 0.053 & 0.044 & 0.029 & 0.047 & 0.042 & \textbf{0.026} & 0.059 & 0.042 & 0.029 & 0.060 & \textbf{0.026} & \textbf{0.026} & \textbf{0.044} \\
terrains & 0.043 & 0.028 & 0.085 & 0.039 & 0.018 & 0.087 & 0.038 & 0.015 & 0.055 & 0.041 & 0.030 & 0.068 & 0.043 & 0.029 & 0.087 & \textbf{0.026} & \textbf{0.014} & \textbf{0.050} \\
\midrule
\multicolumn{19}{l}{\emph{PCA upstream directions, CycleSync}} \\
\midrule
Scene & \multicolumn{3}{c|}{PCA} & \multicolumn{3}{c|}{PCA + 1DSfM} & \multicolumn{3}{c|}{PCA + IR-AAB} & \multicolumn{3}{c|}{PCA + GN} & \multicolumn{3}{c|}{PCA + LM} & \multicolumn{3}{c}{PCA + \TriDE{}} \\
   & \tMean & \tMed & \tP & \tMean & \tMed & \tP & \tMean & \tMed & \tP & \tMean & \tMed & \tP & \tMean & \tMed & \tP & \tMean & \tMed & \tP \\
\midrule
courtyard & 0.417 & 0.050 & 1.251 & 0.535 & 0.053 & 1.573 & 0.451 & 0.051 & 1.500 & 0.442 & 0.049 & 1.370 & 0.417 & 0.050 & 1.251 & \textbf{0.244} & \textbf{0.038} & \textbf{0.988} \\
delivery area & 0.253 & 0.146 & 0.446 & -- & -- & -- & 0.257 & 0.157 & 0.412 & 0.362 & 0.136 & 0.912 & 0.233 & 0.130 & 0.331 & \textbf{0.142} & \textbf{0.045} & \textbf{0.271} \\
electro & 0.321 & 0.088 & 0.935 & 0.329 & 0.112 & 0.901 & 0.320 & 0.072 & \textbf{0.893} & 0.362 & 0.119 & 0.906 & 0.346 & 0.092 & 0.969 & \textbf{0.212} & \textbf{0.032} & 0.901 \\
kicker & 0.123 & 0.073 & 0.132 & 0.138 & 0.074 & 0.317 & 0.127 & 0.061 & 0.153 & 0.104 & 0.060 & 0.118 & 0.114 & 0.065 & 0.105 & \textbf{0.058} & \textbf{0.046} & \textbf{0.090} \\
meadow & 0.256 & 0.116 & 0.883 & 0.468 & 0.187 & 1.359 & 0.254 & 0.113 & 0.867 & 0.466 & 0.161 & 1.295 & \textbf{0.178} & \textbf{0.079} & 0.538 & 0.187 & 0.112 & \textbf{0.430} \\
office & 0.194 & 0.099 & 0.609 & 0.196 & 0.101 & 0.605 & 0.203 & 0.101 & 0.596 & 0.188 & \textbf{0.089} & 0.520 & \textbf{0.180} & 0.100 & 0.549 & 0.231 & 0.097 & \textbf{0.217} \\
pipes & 0.045 & 0.037 & 0.094 & 0.044 & 0.030 & 0.079 & 0.045 & 0.037 & 0.093 & 0.053 & 0.042 & 0.090 & 0.040 & 0.033 & 0.065 & \textbf{0.012} & \textbf{0.012} & \textbf{0.022} \\
relief & \textbf{0.004} & \textbf{0.004} & 0.008 & \textbf{0.004} & \textbf{0.004} & 0.008 & 0.005 & \textbf{0.004} & 0.011 & \textbf{0.004} & \textbf{0.004} & \textbf{0.007} & \textbf{0.004} & \textbf{0.004} & 0.008 & \textbf{0.004} & \textbf{0.004} & 0.009 \\
relief 2 & 0.028 & 0.021 & 0.060 & \textbf{0.027} & 0.021 & 0.059 & \textbf{0.027} & 0.021 & \textbf{0.055} & 0.030 & \textbf{0.020} & 0.060 & 0.037 & 0.031 & 0.078 & 0.037 & 0.024 & 0.077 \\
terrace & 0.029 & 0.028 & 0.042 & 0.029 & 0.028 & 0.042 & 0.043 & 0.027 & 0.047 & 0.029 & 0.027 & \textbf{0.041} & 0.029 & 0.027 & 0.043 & \textbf{0.026} & \textbf{0.025} & 0.043 \\
terrains & 0.037 & 0.018 & 0.054 & 0.037 & 0.018 & 0.053 & 0.039 & 0.017 & 0.063 & 0.032 & 0.022 & \textbf{0.046} & 0.038 & 0.018 & 0.054 & \textbf{0.025} & \textbf{0.014} & 0.050 \\
\midrule
\pagebreak[4]
\multicolumn{19}{l}{\emph{FMS upstream directions, LUD}} \\
\midrule
Scene & \multicolumn{3}{c|}{FMS} & \multicolumn{3}{c|}{FMS + 1DSfM} & \multicolumn{3}{c|}{FMS + IR-AAB} & \multicolumn{3}{c|}{FMS + GN} & \multicolumn{3}{c|}{FMS + LM} & \multicolumn{3}{c}{FMS + \TriDE{}} \\
   & \tMean & \tMed & \tP & \tMean & \tMed & \tP & \tMean & \tMed & \tP & \tMean & \tMed & \tP & \tMean & \tMed & \tP & \tMean & \tMed & \tP \\
\midrule
courtyard & 0.797 & 0.627 & 1.450 & 0.769 & 0.581 & 1.615 & 0.739 & \textbf{0.278} & 1.794 & 0.765 & 0.518 & 1.384 & 0.760 & 0.509 & 1.425 & \textbf{0.559} & 0.287 & \textbf{1.258} \\
delivery area & 0.355 & 0.223 & 0.593 & 0.717 & 0.633 & 1.256 & \textbf{0.181} & \textbf{0.083} & \textbf{0.295} & 0.296 & 0.202 & 0.448 & 0.373 & 0.234 & 0.629 & 0.192 & 0.097 & 0.329 \\
electro & 0.311 & 0.105 & 1.201 & 0.275 & 0.069 & 1.180 & 0.271 & 0.071 & 1.010 & 0.302 & 0.092 & 1.250 & 0.320 & 0.142 & 1.099 & \textbf{0.233} & \textbf{0.046} & \textbf{0.944} \\
kicker & 0.099 & 0.073 & 0.131 & 0.131 & 0.073 & 0.115 & 0.092 & 0.051 & 0.146 & 0.086 & 0.068 & 0.132 & 0.104 & 0.073 & 0.175 & \textbf{0.053} & \textbf{0.047} & \textbf{0.092} \\
meadow & 0.453 & 0.331 & 0.905 & 0.631 & 0.252 & 2.371 & 0.377 & 0.360 & 0.864 & 0.601 & 0.365 & 1.418 & 0.580 & 0.473 & 1.291 & \textbf{0.191} & \textbf{0.140} & \textbf{0.422} \\
office & 0.203 & 0.113 & 0.626 & 0.180 & 0.110 & 0.609 & \textbf{0.149} & 0.103 & 0.369 & 0.208 & 0.106 & 0.640 & 0.226 & 0.113 & 0.814 & 0.213 & \textbf{0.096} & \textbf{0.174} \\
pipes & 0.071 & 0.047 & 0.186 & 0.093 & 0.072 & 0.220 & 0.066 & 0.059 & 0.138 & 0.090 & 0.078 & 0.179 & 0.085 & 0.055 & 0.215 & \textbf{0.014} & \textbf{0.012} & \textbf{0.026} \\
relief & 0.005 & 0.005 & 0.011 & 0.005 & 0.005 & 0.011 & 0.007 & 0.004 & 0.018 & 0.005 & 0.005 & 0.010 & 0.005 & 0.005 & 0.011 & \textbf{0.004} & \textbf{0.003} & \textbf{0.008} \\
relief 2 & 0.136 & 0.144 & 0.201 & 0.135 & 0.143 & 0.198 & 0.091 & 0.080 & \textbf{0.150} & 0.154 & 0.147 & 0.276 & 0.132 & 0.135 & 0.202 & \textbf{0.075} & \textbf{0.043} & 0.173 \\
terrace & 0.028 & \textbf{0.024} & 0.050 & \textbf{0.027} & 0.025 & 0.045 & \textbf{0.027} & 0.026 & \textbf{0.042} & \textbf{0.027} & \textbf{0.024} & 0.045 & \textbf{0.027} & \textbf{0.024} & 0.050 & \textbf{0.027} & 0.026 & 0.046 \\
terrains & 0.030 & 0.014 & 0.055 & 0.028 & \textbf{0.013} & 0.054 & 0.026 & 0.014 & 0.051 & \textbf{0.024} & 0.019 & \textbf{0.042} & 0.030 & 0.014 & 0.054 & 0.026 & 0.014 & 0.051 \\
\midrule
\multicolumn{19}{l}{\emph{FMS upstream directions, CycleSync}} \\
\midrule
Scene & \multicolumn{3}{c|}{FMS} & \multicolumn{3}{c|}{FMS + 1DSfM} & \multicolumn{3}{c|}{FMS + IR-AAB} & \multicolumn{3}{c|}{FMS + GN} & \multicolumn{3}{c|}{FMS + LM} & \multicolumn{3}{c}{FMS + \TriDE{}} \\
   & \tMean & \tMed & \tP & \tMean & \tMed & \tP & \tMean & \tMed & \tP & \tMean & \tMed & \tP & \tMean & \tMed & \tP & \tMean & \tMed & \tP \\
\midrule
courtyard & 0.537 & 0.049 & 1.541 & 0.561 & 0.056 & 1.439 & 0.509 & 0.045 & 1.843 & 0.551 & 0.046 & 2.232 & 0.541 & 0.045 & 2.163 & \textbf{0.191} & \textbf{0.035} & \textbf{0.587} \\
delivery area & 0.158 & 0.065 & 0.289 & -- & -- & -- & 0.157 & 0.063 & 0.290 & 0.309 & 0.148 & 0.747 & 0.157 & 0.064 & 0.288 & \textbf{0.136} & \textbf{0.041} & \textbf{0.279} \\
electro & 0.238 & 0.046 & 0.979 & 0.238 & 0.046 & 0.974 & 0.230 & 0.050 & 0.978 & 0.234 & 0.044 & 0.977 & 0.230 & 0.049 & 0.969 & \textbf{0.197} & \textbf{0.031} & \textbf{0.866} \\
kicker & 0.075 & 0.047 & 0.103 & 0.079 & 0.049 & 0.112 & 0.082 & 0.048 & 0.103 & 0.070 & \textbf{0.042} & 0.095 & 0.075 & 0.047 & 0.103 & \textbf{0.051} & 0.046 & \textbf{0.090} \\
meadow & 0.177 & 0.078 & 0.565 & 0.469 & 0.143 & 1.243 & 0.179 & \textbf{0.077} & 0.582 & 0.333 & 0.107 & 0.754 & 0.218 & 0.106 & 0.754 & \textbf{0.109} & 0.080 & \textbf{0.249} \\
office & \textbf{0.140} & 0.104 & 0.331 & \textbf{0.140} & 0.104 & 0.331 & \textbf{0.140} & 0.104 & 0.333 & 0.151 & 0.107 & 0.340 & 0.179 & 0.104 & 0.329 & 0.218 & \textbf{0.101} & \textbf{0.214} \\
pipes & 0.018 & 0.018 & 0.028 & 0.020 & 0.019 & 0.033 & 0.017 & 0.018 & 0.027 & 0.035 & 0.031 & 0.062 & 0.019 & 0.017 & 0.038 & \textbf{0.013} & \textbf{0.012} & \textbf{0.023} \\
relief & 0.005 & 0.006 & 0.012 & 0.005 & 0.006 & 0.012 & 0.007 & 0.004 & 0.018 & 0.005 & 0.005 & 0.012 & 0.005 & 0.006 & 0.012 & \textbf{0.004} & \textbf{0.003} & \textbf{0.008} \\
relief 2 & \textbf{0.023} & \textbf{0.018} & 0.047 & \textbf{0.023} & \textbf{0.018} & 0.047 & \textbf{0.023} & \textbf{0.018} & 0.047 & 0.024 & 0.023 & \textbf{0.038} & 0.024 & \textbf{0.018} & 0.049 & 0.031 & 0.024 & 0.051 \\
terrace & 0.027 & 0.025 & 0.043 & 0.027 & 0.025 & 0.043 & 0.027 & 0.026 & 0.043 & 0.027 & \textbf{0.023} & \textbf{0.042} & \textbf{0.026} & 0.025 & 0.043 & 0.027 & 0.026 & 0.044 \\
terrains & 0.026 & \textbf{0.014} & 0.051 & 0.026 & \textbf{0.014} & 0.051 & 0.026 & \textbf{0.014} & 0.051 & \textbf{0.019} & 0.018 & \textbf{0.036} & 0.025 & \textbf{0.014} & 0.051 & 0.025 & \textbf{0.014} & 0.050 \\
\end{longtable}
\endgroup

}
\fi
\section{Additional Ablation by Initializer}
\begin{table}[!htbp]
\centering
\scriptsize
\setlength{\tabcolsep}{3.0pt}
\caption{Ablation broken down by direction initializer. Aggregate rows use the same scene-equal macro convention as the main direction table. Lower is better. Bold indicates the best value within each initializer and metric.}
\label{tab:ablation-by-init}
\resizebox{\columnwidth}{!}{%
\begin{tabular}{lccccccccc}
\toprule
& \multicolumn{3}{c}{PCA} & \multicolumn{3}{c}{FMS} & \multicolumn{3}{c}{STE} \\
\cmidrule(lr){2-4}\cmidrule(lr){5-7}\cmidrule(lr){8-10}
Variant & \eMean & \eMed & \eP & \eMean & \eMed & \eP & \eMean & \eMed & \eP \\
\midrule
Input only            & 10.916 & 2.978 & 35.379 &  9.942 & 2.164 & 34.604 &  8.904 & 1.891 & 30.608 \\
Point-only selection  &  8.389 & \textbf{1.950} & 27.168 &  8.393 & 1.955 & 27.172 &  8.392 & 1.956 & 27.086 \\
Uniform triangle score&  8.056 & 2.634 & 23.849 &  7.772 & 2.203 & 23.817 &  7.289 & 2.014 & 22.524 \\
Static reliability    &  7.618 & 2.176 & 23.351 &  7.258 & 1.977 & 22.229 &  6.731 & 1.837 & 20.221 \\
\TriDE{}              & \textbf{7.157} & \textbf{1.950} & \textbf{22.477} & \textbf{6.823} & \textbf{1.855} & \textbf{20.423} & \textbf{6.678} & \textbf{1.819} & \textbf{20.170} \\
\bottomrule
\end{tabular}%
}
\end{table}

\section{Candidate-pool Recall Diagnostic}
\label{app:candidate-recall}

For each cached ETH3D edge and seed, we sample only the random two-normal hypotheses and measure the best angular error to ground truth, excluding the retained initializer direction. With $n_{\mathrm{cand}}=25$, the random pool contains a candidate within $2^\circ$ in 67.2\% of scene-equal edge--seed cases and within $5^\circ$ in 84.7\%, with median best-pool error $1.305^\circ$. The per-scene breakdown is reported in Table~\ref{tab:candidate-recall}.

\begin{table}[H]
\centering
\small
\setlength{\tabcolsep}{5.0pt}
\caption{Candidate-pool recall diagnostic on the cached ETH3D scenes. The initializer is excluded: each entry evaluates only the best of $B$ random two-normal candidates against the ground-truth direction. Fractions report the percentage of scene-equal edge--seed cases whose best sampled candidate is within the stated angular threshold. Lower is better for angular errors.}
\label{tab:candidate-recall}
\begin{tabular}{rcccccc}
\toprule
$B$ & $\leq 1^\circ$ & $\leq 2^\circ$ & $\leq 5^\circ$ & Median & P90 & Mean \\
\midrule
5   & 42.0\% & 59.9\% & 77.5\% & 1.733 & 14.955 & 5.497 \\
10  & 46.1\% & 63.4\% & 81.5\% & 1.523 & 11.027 & 4.269 \\
25  & 51.2\% & 67.2\% & 84.7\% & 1.305 & 8.534  & 3.359 \\
50  & 55.3\% & 69.9\% & 86.9\% & 1.189 & 6.929  & 2.817 \\
100 & 58.4\% & 72.0\% & 88.8\% & 1.083 & 6.069  & 2.476 \\
\bottomrule
\end{tabular}
\end{table}

\section{Random-Initialization Diagnostic}
\label{app:random_init}
This synthetic diagnostic tests whether the triangle update contains recovery signal beyond relying on a good initializer; it is not part of the main benchmark and does not recast \TriDE{} as a standalone solver. We compare standard \textsc{STE+TriDE} with \textsc{RAN}, which randomizes both directions and weights before the same refresh procedure.

\begin{figure}[H]
    \centering
    \includegraphics[width=\linewidth]{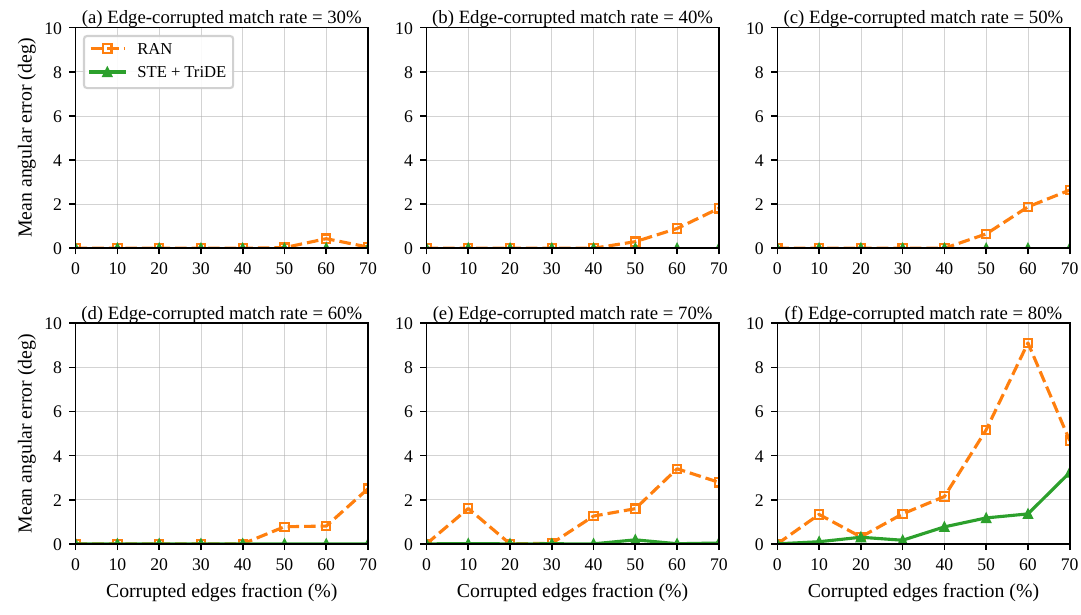}
    \caption{Random-initialization diagnostic on synthetic graphs. The standard initialized pipeline is most stable, while \textsc{RAN} can still recover in several moderate-corruption settings.}
    \label{fig:random_init}
\end{figure}

Figure~\ref{fig:random_init} compares the standard initialized pipeline against the random-direction, random-weight variant. A meaningful STE initializer provides the most reliable starting point, but the success of \textsc{RAN} in moderate regimes suggests that triangle consistency itself carries recovery information.
\clearpage

\providecommand{\TriDE}{TriDE}
\providecommand{\R}{\mathbb{R}}
\providecommand{\Sph}{\mathbb{S}}
\providecommand{\Prob}{\mathbb{P}}
\providecommand{\Exp}{\mathbb{E}}
\providecommand{\err}{\operatorname{err}}
\providecommand{\spanop}{\operatorname{span}}
\providecommand{\argmin}{\operatorname*{arg\,min}}

\section{Theory: Exact Direction Recovery for \TriDE{}}
\label{app:theory}

This appendix gives a streamlined proof of the exact-recovery mechanism used by
\TriDE{}.  The result is a direction-refinement theorem: it proves recovery of
unoriented pairwise translation directions after a single sufficiently sharp
synchronous triangle-weighted sweep.  It is not, by itself, a theorem for camera
locations.  Exact location recovery additionally requires a consistent choice of
signs and the usual bearing-rigidity or clean-location-solver assumptions.

The proof is organized as follows.  Section~\ref{app:theory-model} states the
one-sweep mathematical model.  Section~\ref{app:theory-det} proves the
main deterministic theorem.  Section~\ref{app:theory-prob-lemmas} records the
probabilistic ingredients needed to verify the deterministic assumptions.
Sections~\ref{app:theory-er} and~\ref{app:theory-rgg} give the Erd\H{o}s--R\'enyi
and random-geometric graph phase transitions.

\subsection{One-sweep model and notation}
\label{app:theory-model}

We analyze one synchronous \TriDE{} sweep.  The directions and point-support
scores before the sweep are denoted by
\(\{g_e^{(0)}\}_{e\in E}\) and \(\{A_e^{(0)}\}_{e\in E}\).  The same statement
applies to any later sweep after replacing the superscript \((0)\) by the
current sweep index.

\paragraph{Unoriented directions and error.}
For an observed edge \(e=ij\), let \(g_e^\star=g_{ij}^\star\in\Sph^2/\{\pm1\}\)
be the true unoriented direction.  In formulas we choose an arbitrary unit-vector
representative of \(g_e^\star\); all quantities below are sign-invariant.  For
\(c\in\Sph^2\), define
\begin{equation}
        \err(c,g_e^\star)
        :=
        \|P_{(g_e^\star)^\perp}c\|_2
        =
        \sqrt{1-(c^\top g_e^\star)^2}.
        \label{eq:theory-error}
\end{equation}
Thus \(\err(c,g_e^\star)=0\) if and only if \(c=\pm g_e^\star\).

\paragraph{Valid triangles.}
For a target edge \(ij\), let \(\mathcal N_{ij}\) be the set of retained triangles used by the update for \(ij\).  Thus \(k\in\mathcal N_{ij}\) only if
\(ik,jk\in E\) and the current cross product
\(g_{ik}^{(0)}\times g_{jk}^{(0)}\) passes the nondegeneracy test used by the
algorithm.  For such a retained triangle define
\begin{equation*}
        \widehat n_{ij,k}
        :=
        \frac{g_{ik}^{(0)}\times g_{jk}^{(0)}}
             {\|g_{ik}^{(0)}\times g_{jk}^{(0)}\|_2}.
\end{equation*}
For a nondegenerate true camera triangle, define the true triangle normal
\begin{equation*}
        n_{ij,k}^\star
        :=
        \frac{g_{ik}^\star\times g_{jk}^\star}
             {\|g_{ik}^\star\times g_{jk}^\star\|_2}.
\end{equation*}
The noiseless triangle relation is
\begin{equation*}
        (g_{ij}^\star)^\top n_{ij,k}^\star=0.
\end{equation*}
The sign of \(n_{ij,k}^\star\) is immaterial because the score uses absolute
inner products.

\paragraph{Point support and triangle weights.}
The point-support scale is fixed at
\begin{equation*}
        \sigma_0=1^\circ=\frac{\pi}{180}.
\end{equation*}
For an edge \(e\) with correspondence normals \(x_{e,1},\ldots,x_{e,N_e}\), define
\begin{equation}
        A_e(g)
        :=
        \frac1{N_e}\sum_{r=1}^{N_e}
        \exp\!\left(
        -\frac{\arcsin^2(|g^\top x_{e,r}|)}{2\sigma_0^2}
        \right),
        \qquad
        A_e^{(0)}:=A_e(g_e^{(0)}),
        \label{eq:theory-point-support}
\end{equation}
and \(s_e(g)=1-A_e(g)\).  For a target edge \(ij\) and witness \(k\), the two
neighboring edges are \(ik\) and \(jk\).  Define the support sum
\begin{equation*}
        H_{ij,k}:=A_{ik}^{(0)}+A_{jk}^{(0)}.
\end{equation*}
Since
\begin{equation*}
        \exp[-\beta(s_{ik}^{(0)}+s_{jk}^{(0)})]
        =
        e^{-2\beta}e^{\beta H_{ij,k}},
\end{equation*}
the normalized triangle weights are the softmax weights
\begin{equation}
        w_{ij,k}
        =
        \frac{e^{\beta H_{ij,k}}}
             {\sum_{\ell\in\mathcal N_{ij}}e^{\beta H_{ij,\ell}}}.
        \label{eq:triangle-softmax}
\end{equation}
For a candidate direction \(c\in\Sph^2\), the score minimized by the update is
\begin{equation*}
        Q_{ij,\beta}(c)
        =
        \sum_{k\in\mathcal N_{ij}}w_{ij,k}|c^\top \widehat n_{ij,k}|.
\end{equation*}
The output of one sweep is
\begin{equation*}
        g_{ij}^+
        \in
        \argmin_{c\in\mathcal C_{ij}}Q_{ij,\beta}(c),
\end{equation*}
where \(\mathcal C_{ij}\) is the finite candidate pool for edge \(ij\).  Because
the denominator in~\eqref{eq:triangle-softmax} is independent of \(c\), the same
candidate minimizes the unnormalized score
\begin{equation}
        S_{ij,\beta}(c)
        :=
        \sum_{k\in\mathcal N_{ij}}
        e^{\beta H_{ij,k}}|c^\top\widehat n_{ij,k}|.
        \label{eq:unnormalized-score}
\end{equation}

\subsection{Deterministic exact recovery}
\label{app:theory-det}

The deterministic theorem says that if every edge has enough clean-clean triangle
witnesses, if these witnesses are geometrically well-distributed, if the
point-support scores separate clean-clean triangles from the remaining triangles,
and if the local candidate pool contains the true direction, then one sufficiently
sharp \TriDE{} sweep selects the true direction on every edge.

\begin{definition}[Clean anchors and clean-clean witnesses]
\label{def:clean-witnesses}
Let \(R\subseteq E\) be the set of clean anchor edges at the beginning of the
sweep.  For a target edge \(ij\), define
\begin{equation*}
        \mathcal G_{ij}
        :=
        \{k\in\mathcal N_{ij}: ik\in R,\ jk\in R\},
        \qquad
        \mathcal B_{ij}:=\mathcal N_{ij}\setminus\mathcal G_{ij}.
\end{equation*}
The set \(\mathcal G_{ij}\) is the set of retained clean-clean witnesses for
\(ij\).
\end{definition}

\begin{definition}[Good-witness density]
\label{def:witness-density}
The anchor set \(R\) has good-witness density \(a\in(0,1]\) if
\begin{equation*}
        |\mathcal G_{ij}|\ge a|\mathcal N_{ij}|
        \qquad
        \text{for every }ij\in E.
\end{equation*}
In particular, every target edge has at least one retained clean-clean witness.
\end{definition}

\begin{definition}[Well-distributed clean-clean witnesses]
\label{def:wd-witnesses}
For \(c_{\rm wd}>0\), the set \(\mathcal G_{ij}\) is
\(c_{\rm wd}\)-well-distributed with respect to \(ij\) if
\begin{equation}
        \frac1{|\mathcal G_{ij}|}
        \sum_{k\in\mathcal G_{ij}}
        |h^\top n_{ij,k}^\star|
        \ge
        c_{\rm wd}\|P_{(g_{ij}^\star)^\perp}h\|_2
        \qquad
        \forall h\in\R^3.
        \label{eq:wd}
\end{equation}
\end{definition}

\begin{definition}[Support-sum separation]
\label{def:support-gap}
The sweep state has support-sum gap \(\Delta>0\) if, for every target edge
\(ij\in E\),
\begin{equation}
        \min_{k\in\mathcal G_{ij}}H_{ij,k}
        -
        \max_{k\in\mathcal B_{ij}}H_{ij,k}
        \ge
        \Delta.
        \label{eq:ss-gap}
\end{equation}
If \(\mathcal B_{ij}=\varnothing\), the maximum over \(\mathcal B_{ij}\) is
interpreted as \(-\infty\).
\end{definition}

\begin{definition}[Admissible candidate pool]
\label{def:admissible-candidates}
The candidate pool \(\mathcal C_{ij}\) is \(\eta\)-admissible if
\begin{equation*}
        \pm g_{ij}^\star\in\mathcal C_{ij}
\end{equation*}
and every non-true candidate is separated from the true unoriented direction:
\begin{equation*}
        c\in\mathcal C_{ij}\setminus\{\pm g_{ij}^\star\}
        \quad\Longrightarrow\quad
        \err(c,g_{ij}^\star)\ge\eta.
\end{equation*}
The separation condition is only needed to turn an error bound into exact equality
for a finite candidate pool.  Without it, Theorem~\ref{thm:deterministic-tride}
still gives the displayed deterministic error bound.
\end{definition}

\begin{theorem}[Deterministic one-sweep exact recovery]
\label{thm:deterministic-tride}
Assume the following conditions.
\begin{enumerate}
\item[(D1)] \textbf{Exact clean anchors.}  For every \(e\in R\),
\begin{equation*}
        g_e^{(0)}=\pm g_e^\star.
\end{equation*}
\item[(D2)] \textbf{Dense clean-clean witnesses.}  The anchor set \(R\) has
        good-witness density \(a\in(0,1]\).
\item[(D3)] \textbf{Well-distributed clean-clean witnesses.}  For every
        \(ij\in E\), the set \(\mathcal G_{ij}\) is
        \(c_{\rm wd}\)-well-distributed.
\item[(D4)] \textbf{Support-sum separation.}  The sweep state has support-sum
        gap \(\Delta>0\).
\item[(D5)] \textbf{Candidate admissibility.}  Every \(\mathcal C_{ij}\) is
        \(\eta\)-admissible.
\end{enumerate}
Then one synchronous \TriDE{} sweep satisfies
\begin{equation}
        \max_{ij\in E}\err(g_{ij}^+,g_{ij}^\star)
        \le
        \frac{1-a}{a\,c_{\rm wd}}e^{-\beta\Delta}.
        \label{eq:det-error-bound}
\end{equation}
Consequently, if
\begin{equation}
        \frac{1-a}{a\,c_{\rm wd}}e^{-\beta\Delta}<\eta,
        \label{eq:det-exact-condition}
\end{equation}
then
\begin{equation*}
        g_{ij}^+=\pm g_{ij}^\star
        \qquad
        \forall ij\in E.
\end{equation*}
Equivalently, it is sufficient to choose
\begin{equation*}
        \beta>
        \frac1\Delta
        \log\!\left(\frac{1-a}{a\,c_{\rm wd}\eta}\right),
\end{equation*}
with the convention that the logarithmic condition is vacuous if
\((1-a)/(a\,c_{\rm wd}\eta)\le1\).
\end{theorem}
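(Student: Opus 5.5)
We fix a target edge \(ij\) and compare the score of the selected candidate \(g_{ij}^+\) with that of the true candidate \(\pm g_{ij}^\star\), which lies in \(\mathcal C_{ij}\) by (D5). Throughout we work with the unnormalized score \(S_{ij,\beta}\) of~\eqref{eq:unnormalized-score}, which has the same minimizers over \(\mathcal C_{ij}\). We also split the sum over \(\mathcal N_{ij}\) into the clean-clean part \(\mathcal G_{ij}\) and the remainder \(\mathcal B_{ij}\).

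\emph{Step 1: exactness of clean normals.} For \(k\in\mathcal G_{ij}\), (D1) gives \(g_{ik}^{(0)}=\pm g_{ik}^\star\) and \(g_{jk}^{(0)}=\pm g_{jk}^\star\). Hence \(\widehat n_{ij,k}=\pm n_{ij,k}^\star\), using that retained triangles are nondegenerate. The noiseless relation then gives \(|(g_{ij}^\star)^\top\widehat n_{ij,k}|=0\). Consequently
\[
S_{ij,\beta}(g_{ij}^\star)=\sum_{k\in\mathcal B_{ij}}e^{\beta H_{ij,k}}|(g_{ij}^\star)^\top\widehat n_{ij,k}|\le |\mathcal B_{ij}|\,e^{\beta M_B},
\]
where \(M_B:=\max_{\mathcal B_{ij}}H_{ij,k}\).

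\emph{Step 2: lower bound for an arbitrary candidate.} For any \(c\), we drop the nonnegative \(\mathcal B_{ij}\) terms and use \(H_{ij,k}\ge m_G:=\min_{\mathcal G_{ij}}H_{ij,k}\). Applying (D3) with \(h=c\), and noting \(|c^\top\widehat n_{ij,k}|=|c^\top n^\star_{ij,k}|\) on \(\mathcal G_{ij}\), we get
\[
S_{ij,\beta}(c)\ge e^{\beta m_G}\sum_{k\in\mathcal G_{ij}}|c^\top n^\star_{ij,k}|\ge e^{\beta m_G}|\mathcal G_{ij}|\,c_{\rm wd}\,\err(c,g_{ij}^\star),
\]
since \(\|P_{(g^\star_{ij})^\perp}c\|_2=\err(c,g_{ij}^\star)\) by~\eqref{eq:theory-error}.

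\emph{Step 3: combine.} Minimality gives \(S_{ij,\beta}(g^+_{ij})\le S_{ij,\beta}(g^\star_{ij})\). Chaining the two bounds and using (D4), \(m_G-M_B\ge\Delta\), yields
\[
\err(g_{ij}^+,g_{ij}^\star)\le \frac{|\mathcal B_{ij}|}{c_{\rm wd}|\mathcal G_{ij}|}e^{-\beta\Delta}.
\]
By (D2), \(|\mathcal B_{ij}|\le(1-a)|\mathcal N_{ij}|\) and \(|\mathcal G_{ij}|\ge a|\mathcal N_{ij}|>0\), which gives~\eqref{eq:det-error-bound}. The case \(\mathcal B_{ij}=\varnothing\) gives error \(0\) directly.

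\emph{Step 4: exact recovery.} Under~\eqref{eq:det-exact-condition}, any non-true candidate would have error at least \(\eta\) by (D5), contradicting the bound. Hence \(g^+_{ij}=\pm g^\star_{ij}\). Solving the inequality for \(\beta\) gives the stated threshold.

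The only delicate point is Step 1's identification \(\widehat n_{ij,k}=\pm n^\star_{ij,k}\). This requires the nondegeneracy of the true clean triangles so that \(n^\star\) is defined, and sign invariance. Both follow from the retention test and the absolute value in the score. The synchronous nature of the sweep ensures all \(\widehat n\) and \(H\) are computed from the sweep-\((0)\) state, so the argument is edge-by-edge.
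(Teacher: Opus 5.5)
Your proposal is correct and follows the paper's proof essentially step for step. The steps are: the true candidate has zero residual on clean-clean witnesses, the true score satisfies $S_{ij,\beta}(g_{ij}^\star)\le|\mathcal B_{ij}|e^{\beta\max_{\mathcal B_{ij}}H_{ij,k}}$, well-distributedness lower-bounds the score of any candidate, and minimality, the support gap (D4), witness density (D2) and admissibility (D5) finish the argument. You also note that the retention test, applied to exact anchors, forces clean triangles to be nondegenerate, so $n^\star_{ij,k}$ is defined; the paper leaves this point implicit.
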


\begin{proof}
Fix a target edge \(ij\).  Write
\(\mathcal G=\mathcal G_{ij}\) and \(\mathcal B=\mathcal B_{ij}\).  We compare
candidates using the unnormalized score~\eqref{eq:unnormalized-score}.

For \(k\in\mathcal G\), both neighboring edges are exact clean anchors by (D1).
Therefore
\begin{equation*}
        \widehat n_{ij,k}=\pm n_{ij,k}^\star.
\end{equation*}
Because the score uses absolute values, the sign is irrelevant, and the true
candidate has zero residual on every clean-clean witness:
\begin{equation*}
        |(g_{ij}^\star)^\top\widehat n_{ij,k}|=0,
        \qquad k\in\mathcal G.
\end{equation*}
Let
\begin{equation*}
        H_+ := \min_{k\in\mathcal G}H_{ij,k}.
\end{equation*}
By support-sum separation, every \(k\in\mathcal B\) satisfies
\begin{equation*}
        H_{ij,k}\le H_+-\Delta.
\end{equation*}
Every absolute residual is at most one.  Thus the true candidate satisfies
\begin{equation}
        S_{ij,\beta}(g_{ij}^\star)
        \le
        |\mathcal B|e^{\beta(H_+-\Delta)}.
        \label{eq:true-score-upper}
\end{equation}
If \(\mathcal B=\varnothing\), the right-hand side is zero.

Now take any candidate \(c\in\mathcal C_{ij}\).  The clean-clean witnesses alone
give
\begin{align}
        S_{ij,\beta}(c)
        &\ge
        \sum_{k\in\mathcal G}e^{\beta H_{ij,k}}
        |c^\top n_{ij,k}^\star| \notag                                            \\
        &\ge
        e^{\beta H_+}
        \sum_{k\in\mathcal G}|c^\top n_{ij,k}^\star| \notag                         \\
        &\ge
        e^{\beta H_+}|\mathcal G|c_{\rm wd}\,
        \err(c,g_{ij}^\star),
        \label{eq:false-score-lower}
\end{align}
where the last line uses the well-distributedness condition~\eqref{eq:wd} with
\(h=c\).

Since \(g_{ij}^\star\in\mathcal C_{ij}\), the selected candidate obeys
\begin{equation*}
        S_{ij,\beta}(g_{ij}^+)
        \le
        S_{ij,\beta}(g_{ij}^\star).
\end{equation*}
Combining this inequality with~\eqref{eq:true-score-upper} and
\eqref{eq:false-score-lower} applied to \(c=g_{ij}^+\), we get
\begin{equation*}
        e^{\beta H_+}|\mathcal G|c_{\rm wd}\,
        \err(g_{ij}^+,g_{ij}^\star)
        \le
        |\mathcal B|e^{\beta(H_+-\Delta)}.
\end{equation*}
Therefore
\begin{equation*}
        \err(g_{ij}^+,g_{ij}^\star)
        \le
        \frac{|\mathcal B|}{|\mathcal G|}\frac1{c_{\rm wd}}e^{-\beta\Delta}.
\end{equation*}
By good-witness density,
\begin{equation*}
        \frac{|\mathcal B|}{|\mathcal G|}
        =
        \frac{|\mathcal N_{ij}|-|\mathcal G_{ij}|}{|\mathcal G_{ij}|}
        \le
        \frac{1-a}{a}.
\end{equation*}
This proves~\eqref{eq:det-error-bound}.  If
\eqref{eq:det-exact-condition} holds, then every selected candidate has error
strictly below \(\eta\).  By candidate admissibility, no non-true candidate has
error below \(\eta\), so \(g_{ij}^+=\pm g_{ij}^\star\).  Since the edge \(ij\) was
arbitrary, the conclusion holds for all observed edges.
\end{proof}

\begin{corollary}[Edge-level support condition implies support-sum separation]
\label{cor:support-gap-from-edges}
Suppose there are numbers \(A_+>A_-\) and \(u\ge0\) such that
\begin{equation*}
        A_e^{(0)}\in[A_+-u,A_++u]
        \qquad
        \forall e\in R,
\end{equation*}
and
\begin{equation*}
        A_e^{(0)}\le A_-+u
        \qquad
        \forall e\notin R.
\end{equation*}
Then the support-sum separation condition~\eqref{eq:ss-gap} holds with
\begin{equation*}
        \Delta=A_+-A_- -4u.
\end{equation*}
In particular, if \(A_+-A_->4u\), Theorem~\ref{thm:deterministic-tride} applies
with the positive effective gap \(\Delta_{\rm eff}=A_+-A_- -4u\).
\end{corollary}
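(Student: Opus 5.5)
The plan is to verify the support-sum gap~\eqref{eq:ss-gap} directly, edge by edge, by bounding the two sides separately and then invoking Theorem~\ref{thm:deterministic-tride}. Fix a target edge \(ij\in E\) and recall \(H_{ij,k}=A_{ik}^{(0)}+A_{jk}^{(0)}\). The assumptions classify each edge only by whether it lies in \(R\), and Definition~\ref{def:clean-witnesses} classifies witnesses the same way. So the whole argument is a pair of elementary inequalities on sums of two support scores.

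\emph{Lower bound on clean-clean witnesses.} For \(k\in\mathcal G_{ij}\), both \(ik\) and \(jk\) lie in \(R\). The lower end of the interval hypothesis then gives
\begin{equation*}
        H_{ij,k}\ge 2(A_+-u).
\end{equation*}
Hence \(\min_{k\in\mathcal G_{ij}}H_{ij,k}\ge 2A_+-2u\).

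\emph{Upper bound on the remaining witnesses.} For \(k\in\mathcal B_{ij}\), at least one of \(ik,jk\) is not in \(R\). One of the two summands is therefore at most \(A_-+u\). The other summand is at most \(\max(A_++u,\,A_-+u)=A_++u\), since \(A_+>A_-\). This gives
\begin{equation*}
        H_{ij,k}\le (A_-+u)+(A_++u)=A_++A_-+2u.
\end{equation*}
If \(\mathcal B_{ij}=\varnothing\), the convention \(\max=-\infty\) makes the gap condition trivial.

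\emph{Combining.} Subtracting the two bounds gives
\begin{equation*}
        \min_{\mathcal G_{ij}}H_{ij,k}-\max_{\mathcal B_{ij}}H_{ij,k}
        \ge (2A_+-2u)-(A_++A_-+2u)=A_+-A_--4u.
\end{equation*}
This is exactly~\eqref{eq:ss-gap} with \(\Delta=A_+-A_--4u\). When \(A_+-A_->4u\), this \(\Delta\) is positive, so hypothesis (D4) holds. Theorem~\ref{thm:deterministic-tride} then applies with \(\Delta_{\rm eff}=A_+-A_--4u\), provided the other hypotheses (D1)--(D3) and (D5) are assumed as in the theorem.

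\emph{Main obstacle.} The only delicate point is the mixed witness case, where one neighbor is a clean anchor and the other is not. Here the clean neighbor contributes up to \(A_++u\) rather than \(A_-+u\). This is why the gap loses a full \(A_+\) from \(2A_+\) and ends up as \(A_+-A_-\) rather than \(2(A_+-A_-)\). The step must use \(A_+>A_-\) to bound the unclassified summand by \(A_++u\) uniformly. One should also check that the bound holds regardless of which of \(ik\) or \(jk\) is the non-anchor; it does, since \(H_{ij,k}\) is symmetric in the two summands.
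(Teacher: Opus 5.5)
Your proof is correct and follows the paper's argument essentially step for step: you lower-bound clean-clean support sums by $2(A_+-u)$, upper-bound the remaining witnesses by $(A_++u)+(A_-+u)$ using $A_-\le A_+$ for the unclassified summand, and subtract to get $A_+-A_--4u$. Your explicit treatment of the empty-$\mathcal B_{ij}$ convention and your remark that (D1)--(D3) and (D5) must still be assumed separately are harmless additions.
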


\begin{proof}
If \(k\in\mathcal G_{ij}\), then both neighboring edges are in \(R\), so
\begin{equation*}
        H_{ij,k}=A_{ik}^{(0)}+A_{jk}^{(0)}\ge 2(A_+-u).
\end{equation*}
If \(k\in\mathcal B_{ij}\), then at least one of \(ik\) and \(jk\) is not in
\(R\).  That nonclean edge has support at most \(A_-+u\), while the other edge
has support at most \(A_++u\) if it is clean and at most \(A_-+u\le A_++u\)
otherwise.  Hence
\begin{equation*}
        H_{ij,k}\le (A_++u)+(A_-+u),
        \qquad k\in\mathcal B_{ij}.
\end{equation*}
The clean-clean support sum exceeds the non-clean-clean support sum by at least
\begin{equation*}
        2(A_+-u)-(A_++u+A_-+u)=A_+-A_- -4u.
\end{equation*}
\end{proof}

\paragraph{Exact states are fixed points.}
If all retained witnesses for every edge are clean, well-distributed, and
nondegenerate, then \(a=1\) and the deterministic bound gives zero error.  Thus,
provided the true direction remains in each candidate pool, an exact direction
field is a fixed point of the update.

\subsection{Probabilistic ingredients}
\label{app:theory-prob-lemmas}

The deterministic theorem only needs realized support separation,
well-distributed clean-clean witnesses, and candidate admissibility.  The lemmas
below give simple sufficient conditions for these assumptions in the random
models.

\subsubsection{Point-support gap at fixed scale}

Let \(u\in\Sph^2\) be fixed and let \(x\sim\operatorname{Unif}(\Sph^2)\).  Since
\(|u^\top x|\sim\operatorname{Unif}[0,1]\) in three dimensions, the background
support of a random correspondence normal is
\begin{equation*}
        b_{\sigma_0}
        =
        \int_0^1
        \exp\!\left(-\frac{\arcsin^2 z}{2\sigma_0^2}\right)dz
        =
        \int_0^{\pi/2}e^{-\alpha^2/(2\sigma_0^2)}\cos\alpha\,d\alpha.
\end{equation*}
For small \(\sigma_0\) in radians,
\begin{equation*}
        b_{\sigma_0}
        =
        \sqrt{\frac\pi2}\,\sigma_0+O(\sigma_0^3),
\end{equation*}
and at \(\sigma_0=1^\circ\), \(b_{\sigma_0}\approx0.022\).

If a clean edge has an exact-inlier fraction \(\pi_{\rm cl}\), exact inliers
satisfy \((g_e^\star)^\top x=0\) and contribute one to \(A_e(g_e^\star)\).  Under
uniform background outliers, the population support of the true direction is
\begin{equation*}
        A_+
        =
        \pi_{\rm cl}+(1-\pi_{\rm cl})b_{\sigma_0}.
\end{equation*}
A generic wrong initialized direction has population support close to
\(A_-\approx b_{\sigma_0}\), giving the population gap
\begin{equation*}
        \Delta_{\sigma_0}:=A_+-A_-
        \approx
        \pi_{\rm cl}(1-b_{\sigma_0}).
\end{equation*}
Thus the theory does not require majority inliers on a clean anchor edge; it
requires a positive support gap between clean initialized directions and weak
initialized directions.

\begin{lemma}[Uniform support concentration]
\label{lem:support-concentration}
Let \(m=|E|\).  Let \(\mathcal F_0\) be the sigma-field that determines the graph,
the clean-anchor indicators, and the initialized directions
\(\{g_e^{(0)}\}_{e\in E}\).  Conditional on \(\mathcal F_0\), assume that the
measurements used to evaluate \(A_e(g_e^{(0)})\) are independent across samples
and that each summand in~\eqref{eq:theory-point-support} lies in \([0,1]\).  Suppose
\(N_e\ge N_{\min}\) for all edges and set
\begin{equation*}
        \mu_e:=\Exp[A_e(g_e^{(0)})\mid\mathcal F_0].
\end{equation*}
Then, conditional on \(\mathcal F_0\), with probability at least \(1-\zeta\),
\begin{equation*}
        |A_e^{(0)}-\mu_e|
        \le
        u_N:=\sqrt{\frac{\log(2m/\zeta)}{2N_{\min}}}
        \qquad
        \forall e\in E.
\end{equation*}
\end{lemma}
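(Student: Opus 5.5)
The plan is to apply Hoeffding's inequality edge by edge, conditionally on \(\mathcal F_0\), and then combine the edges with a union bound over the \(m\) edges.

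Fix an edge \(e\in E\) and condition on \(\mathcal F_0\). This fixes \(g_e^{(0)}\), so
\[
A_e^{(0)}=\frac1{N_e}\sum_{r=1}^{N_e}Z_{e,r},
\qquad
Z_{e,r}:=\exp\!\left(-\frac{\arcsin^2(|(g_e^{(0)})^\top x_{e,r}|)}{2\sigma_0^2}\right).
\]
By assumption the \(Z_{e,r}\) are conditionally independent and lie in \([0,1]\). Their average has conditional mean \(\mu_e\). Hoeffding's inequality then gives, for every \(t>0\),
\[
\Prob\bigl(|A_e^{(0)}-\mu_e|>t\mid\mathcal F_0\bigr)\le 2\exp(-2N_e t^2)\le 2\exp(-2N_{\min}t^2),
\]
where the second step uses \(N_e\ge N_{\min}\).

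The one point needing care is that \(N_e\) itself should be \(\mathcal F_0\)-measurable, or at least deterministic given the conditioning. I would state this as part of what the graph and measurement model fixes, as the lemma implicitly does. If \(N_e\) were random, I would additionally condition on it; the bound depends only on \(N_{\min}\), so it is unaffected.

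Next, set \(t=u_N=\sqrt{\log(2m/\zeta)/(2N_{\min})}\). Then \(2\exp(-2N_{\min}u_N^2)=\zeta/m\). A union bound over the \(m\) edges shows that the probability that some edge violates the bound is at most \(\zeta\). This yields the claim uniformly over \(e\in E\) with conditional probability at least \(1-\zeta\).

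Independence across edges is not needed, because the union bound does not require it. There is no real obstacle beyond checking that the conditioning makes \(g_e^{(0)}\) fixed, so that each summand is a fixed bounded function of an independent sample. This is exactly why the lemma conditions on \(\mathcal F_0\): in practice the initializer may depend on the same measurements, and the assumption sidesteps that dependence.
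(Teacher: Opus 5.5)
Your proposal is correct and follows the paper's proof: conditional Hoeffding on each edge, weakened using \(N_e\ge N_{\min}\), then a union bound over the \(m\) edges with \(u_N\) chosen so that \(2e^{-2N_{\min}u_N^2}=\zeta/m\). Your remark that \(N_e\) should be fixed by the conditioning is a reasonable clarification; the paper leaves it implicit.
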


\begin{proof}
For a fixed edge, Hoeffding's inequality gives
\begin{equation*}
        \Prob\{|A_e^{(0)}-\mu_e|>u\mid\mathcal F_0\}
        \le
        2e^{-2N_eu^2}
        \le
        2e^{-2N_{\min}u^2}.
\end{equation*}
Taking a union bound over \(m\) edges and choosing
\(u=\sqrt{\log(2m/\zeta)/(2N_{\min})}\) proves the claim.
\end{proof}

\paragraph{Split and non-split measurements.}
The concentration lemma is a sufficient probabilistic justification for the
support gap when the measurements used to score an initialized direction are
independent of that initialized direction, for example by sample splitting.  The
deterministic theorem itself does not require sample splitting; it only requires
the realized support-sum separation condition~\eqref{eq:ss-gap}.

\subsubsection{Candidate recall}

\begin{lemma}[Random-pair candidate recall]
\label{lem:candidate-recall}
Suppose every edge has at least a \(\pi_{\min}\)-fraction of exact inlier
correspondence normals, and two independently selected inlier normals on the same
edge are nonparallel with probability one.  If the candidate pool for each edge
contains \(B\) independent random two-normal hypotheses, then there is a numerical
constant \(c>0\) such that
\begin{equation*}
        \Prob\{\text{some observed edge has no inlier--inlier sampled pair}\}
        \le
        m e^{-cB\pi_{\min}^2}.
\end{equation*}
Consequently, \(B\pi_{\min}^2\ge C\log(m/\zeta)\) is sufficient for every edge's
candidate pool to contain \(\pm g_e^\star\) with probability at least
\(1-\zeta\).
\end{lemma}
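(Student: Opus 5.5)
Fix an edge $e$ with $N_e$ correspondence normals, of which at least $\lceil \pi_{\min} N_e\rceil$ are exact inliers. I will model the $B$ two-normal hypotheses as $B$ independent draws of an index pair $(r_1,r_2)$, each index uniform on $\{1,\ldots,N_e\}$ and the two indices independent; drawing without replacement only raises the success probability. A single draw yields an inlier--inlier pair with probability at least $\pi_{\min}^2$ (up to a harmless $1-O(1/N_e)$ factor in the without-replacement version, which can be absorbed into the constant $c$ once $N_e\ge 2$).

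Next I check that such a pair gives the true direction. Both normals satisfy $(g_e^\star)^\top x_{e,r}=0$, so both lie in the plane $(g_e^\star)^\perp$. By hypothesis they are almost surely nonparallel, so $x_{e,r_1}\times x_{e,r_2}\ne 0$, the pair is not discarded by the nonzero-cross-product filter in~\eqref{eq:candidate-pool}, and the normalized cross product is orthogonal to that plane. Hence it equals $\pm g_e^\star$.

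Independence of the $B$ draws then gives
\[
\Prob\{\text{no inlier--inlier pair on }e\}\le (1-\pi_{\min}^2)^B\le e^{-B\pi_{\min}^2},
\]
using $1-x\le e^{-x}$. So the claim holds with $c=1$, or with a smaller numerical $c$ to absorb the without-replacement correction. A union bound over the $m$ observed edges gives the stated bound $m e^{-cB\pi_{\min}^2}$. For the consequence, set $m e^{-cB\pi_{\min}^2}\le\zeta$. This is equivalent to $B\pi_{\min}^2\ge c^{-1}\log(m/\zeta)$, so $C=1/c$ works.

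The only delicate point is the modeling of the sampling: I need independence across the $B$ hypotheses and a per-draw success probability of order $\pi_{\min}^2$. The almost-sure nonparallelism is exactly what excludes degenerate inlier pairs, and it holds simultaneously over all edges because there are countably many pairs.
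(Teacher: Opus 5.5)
Your proof is the paper's argument. It uses a per-draw inlier--inlier probability of order $\pi_{\min}^2$, the fact that two nonparallel exact-inlier normals lie in $(g_e^\star)^\perp$ so their normalized cross product is $\pm g_e^\star$, the bound $(1-c\pi_{\min}^2)^B\le e^{-cB\pi_{\min}^2}$ from independence of the $B$ draws, and a union bound over the $m$ edges. Two small corrections to your sampling remark. Without replacement, the probability that both indices are inliers is multiplied by $\frac{I-1}{I}\cdot\frac{N_e}{N_e-1}$, where $I$ is the inlier count; this can only lower it, and it is at least $1/2$ only when $I\ge 2$, not merely $N_e\ge 2$. With replacement, the degenerate draws $r_1=r_2$ cost the same factor $\frac{I-1}{I}$, so $c=1$ does not quite hold. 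The paper's ``$c$ absorbs the distinction'' likewise leaves this at-least-two-inliers requirement implicit.
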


\begin{proof}
For one edge, a random pair is an inlier--inlier pair with probability at least
\(c\pi_{\min}^2\), where \(c\) absorbs the distinction between ordered,
unordered, with-replacement, and without-replacement sampling.  Two nonparallel
exact inlier normals lie in \((g_e^\star)^\perp\), and their cross product is
therefore parallel to \(g_e^\star\).  Hence an inlier--inlier pair contributes the
true unoriented direction to the candidate pool.  The probability that \(B\) pair
samples miss all inlier--inlier pairs is at most
\begin{equation*}
        (1-c\pi_{\min}^2)^B\le e^{-cB\pi_{\min}^2}.
\end{equation*}
A union bound over the \(m\) observed edges proves the result.
\end{proof}

\subsubsection{Empirical well-distributedness}

For distinct points \(x,y\in\R^3\), define
\begin{equation*}
        \mathcal W_{x,y}(z,h)
        :=
        \left\|P_{\spanop\{z-x,z-y\}^{\perp}}h\right\|_2.
\end{equation*}
In three dimensions, when \(x,y,z\) are noncollinear,
\(\mathcal W_{x,y}(z,h)=|h^\top n_{xy,z}|\), where \(n_{xy,z}\) is the unit
normal to the plane through \(x,y,z\).

\begin{definition}[Population well-distributedness]
\label{def:population-wd}
A distribution \(\mu\) on \(\R^3\) is \(c_0\)-well-distributed if, for all distinct
\(x,y\) in its support and all \(h\in\R^3\),
\begin{equation*}
        \Exp_{Z\sim\mu}\mathcal W_{x,y}(Z,h)
        \ge
        c_0\|P_{(x-y)^\perp}h\|_2.
\end{equation*}
\end{definition}

\begin{lemma}[Empirical well-distributedness]
\label{lem:empirical-wd}
Let \(Z_1,\ldots,Z_M\) be i.i.d. samples from a
\(c_0\)-well-distributed distribution \(\mu\), conditional on fixed distinct
\(x,y\).  There is a numerical constant \(C\) such that if
\begin{equation}
        M\ge Cc_0^{-2}\log(L/\zeta),
        \label{eq:emp-wd-sample-size}
\end{equation}
then, with probability at least \(1-\zeta/L\),
\begin{equation*}
        \frac1M\sum_{\ell=1}^M\mathcal W_{x,y}(Z_\ell,h)
        \ge
        \frac{c_0}{2}\|P_{(x-y)^\perp}h\|_2
        \qquad
        \forall h\in\R^3.
\end{equation*}
\end{lemma}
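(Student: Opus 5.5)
The plan is to reduce the statement to a uniform deviation bound over a one-dimensional family of test directions, and to control that deviation by symmetrization plus a bounded-differences inequality. A plain net argument would cost an extra \(\log(1/c_0)\) factor, which this route avoids.

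\emph{Step 1 (reduction to a circle).} Fix the distinct points \(x,y\). Whenever \(x,y,z\) are noncollinear, the plane normal \(n_{xy,z}\) is orthogonal to \(x-y\). Hence
\[
\mathcal W_{x,y}(z,h)=|h^\top n_{xy,z}|=|(P_{(x-y)^\perp}h)^\top n_{xy,z}|.
\]
The degenerate collinear case also depends on \(h\) only through \(P_{(x-y)^\perp}h\), since \(\spanop\{z-x,z-y\}\) then contains \(x-y\). The claimed inequality is therefore trivial when \(P_{(x-y)^\perp}h=0\), and it is positively homogeneous in \(h\). So it suffices to prove
\[
\frac1M\sum_\ell f_h(Z_\ell)\ge c_0/2 \quad\text{for all unit } h\in(x-y)^\perp,
\qquad f_h(z):=\mathcal W_{x,y}(z,h).
\]
On this unit circle \(f_h\in[0,1]\), and by population well-distributedness \(\Exp f_h(Z)\ge c_0\). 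In particular \(c_0\le1\).

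\emph{Step 2 (uniform deviation).} Define
\[
D:=\sup_{h}\Big(\Exp f_h(Z)-\frac1M\sum_\ell f_h(Z_\ell)\Big),
\]
with the supremum over unit \(h\in(x-y)^\perp\). It is enough to show \(D\le c_0/2\) with probability at least \(1-\zeta/L\).

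First bound the expectation by symmetrization:
\[
\Exp D\le \frac2M\,\Exp\sup_h\Big|\sum_\ell\varepsilon_\ell f_h(Z_\ell)\Big|.
\]
Each \(f_h(z)\) is \(|\langle h,v(z)\rangle|\) for a vector \(v(z)\) with \(\|v(z)\|\le1\): take \(v(z)=n_{xy,z}\), or, in the collinear case, an arbitrary unit vector orthogonal to the line through \(x,y\). The map \(t\mapsto|t|\) is a 1-Lipschitz contraction vanishing at zero, so Ledoux--Talagrand contraction (or a direct argument for the absolute value) bounds the Rademacher term by a constant multiple of
\[
\Exp\sup_h\langle h,\textstyle\sum_\ell\varepsilon_\ell v(Z_\ell)\rangle\le \Exp\|\textstyle\sum_\ell\varepsilon_\ell v(Z_\ell)\|\le\sqrt M.
\]
Hence \(\Exp D\le C_1/\sqrt M\).

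Next, \(D\) changes by at most \(1/M\) when a single \(Z_\ell\) is replaced, so McDiarmid's inequality gives, with probability at least \(1-\zeta/L\),
\[
D\le \frac{C_1}{\sqrt M}+\sqrt{\frac{\log(L/\zeta)}{2M}}.
\]

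\emph{Step 3 (sample size).} Under~\eqref{eq:emp-wd-sample-size} with \(C\) large, each of the two terms in the bound on \(D\) is at most \(c_0/4\). This uses \(\log(L/\zeta)\ge1\), which I would state as the standing convention \(L/\zeta\ge e\). Then \(D\le c_0/2\), which gives the claim after undoing the homogeneity reduction.

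The main obstacle is Step 2. The uniformity over \(h\) must be handled without paying \(\log(1/c_0)\); a naive \(c_0/8\)-net of the circle has about \(1/c_0\) points, and that factor cannot be absorbed into \(C\log(L/\zeta)\). The symmetrization--contraction route avoids this because the class \(\{f_h\}\) consists of absolute values of linear functionals of bounded vectors. The one point to verify carefully is the contraction step for the absolute value, including the two-sided supremum.
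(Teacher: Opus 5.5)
Your argument is correct up to one small slip, and it takes a genuinely different route from the paper.

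\textbf{Comparison with the paper.} The paper uses the net argument you anticipated. After the same reduction to unit $h\in(x-y)^\perp$, it applies Hoeffding at level $3c_0/4$ for each fixed $h$. It then takes a union bound over a $(c_0/16)$-net of the circle of cardinality $O(c_0^{-1})$, and extends to the whole circle using that $h\mapsto\mathcal W_{x,y}(z,h)$ is $1$-Lipschitz. Finally it asserts that a large numerical $C$ absorbs everything.

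Your objection applies to that proof. The union bound leaves failure probability $O(c_0^{-1})e^{-cMc_0^2}$, so the net route really requires $Mc_0^2\gtrsim\log(L/\zeta)+\log(1/c_0)$. The extra term is absorbed into $C\log(L/\zeta)$ only when $\log(1/c_0)\lesssim\log(L/\zeta)$. This is harmless where the lemma is used, since there $c_0$ is a fixed constant and $L=n^2$ grows. It is not, however, uniform in $c_0$.

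Your symmetrization, contraction and McDiarmid route exploits the fact that the class consists of absolute values of bounded linear functionals. It therefore proves the lemma as literally stated, with a $c_0$-free constant. The paper's route is more elementary (Hoeffding plus a union bound) and is adequate for its applications.

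Your convention $L/\zeta\ge e$ is genuinely needed, and the paper leaves it implicit. When $\log(L/\zeta)\le c_0^2/C$, the hypothesis permits $M=1$. A single sample off the line through $x,y$ then makes the left side vanish at the unit $h\in(x-y)^\perp$ orthogonal to its plane normal.

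\textbf{The slip.} For $z$ on the line through $x,y$ one has $\mathcal W_{x,y}(z,h)=\|P_{(x-y)^\perp}h\|_2$. This equals $1$ for every unit $h\in(x-y)^\perp$, so it is not of the form $|\langle h,v\rangle|$ for a fixed unit vector $v$, as your Step 2 claims.

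The repair is immediate. Split the symmetrized sum into collinear and noncollinear samples. The collinear part is $\sum_{\ell\ \mathrm{collinear}}\varepsilon_\ell$, which does not depend on $h$ and has expected absolute value at most $\sqrt M$. Apply the contraction step only to the noncollinear part, where $v(z)=n_{xy,z}$ is valid. This only enlarges $C_1$, and the rest of your argument goes through unchanged.
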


\begin{proof}
By homogeneity it is enough to consider unit vectors
\(h\in(x-y)^\perp\), which form a unit circle.  For fixed \(h\), the random
variables \(Y_\ell(h):=\mathcal W_{x,y}(Z_\ell,h)\) are bounded in \([0,1]\) and
have mean at least \(c_0\).  Hoeffding's inequality gives
\begin{equation*}
        \Prob\!\left\{
        \frac1M\sum_{\ell=1}^M Y_\ell(h)<\frac{3c_0}{4}
        \right\}
        \le
        e^{-cMc_0^2}
\end{equation*}
for a numerical constant \(c>0\).  Apply this bound on a fixed
\((c_0/16)\)-net of the unit circle.  The net has cardinality \(O(c_0^{-1})\).
The map \(h\mapsto \mathcal W_{x,y}(z,h)\) is 1-Lipschitz for every \(z\), so the
average extends from the net to the whole unit circle with loss at most
\(c_0/4\).  Taking \(C\) sufficiently large in~\eqref{eq:emp-wd-sample-size}
proves the claim.
\end{proof}

\subsection{Erd\H{o}s--R\'enyi phase transition}
\label{app:theory-er}

The following theorem verifies the deterministic assumptions in an
Erd\H{o}s--R\'enyi viewing graph.  The clean edges are not the only edges to be
recovered.  They serve as anchors that provide clean-clean triangle witnesses;
weak edges may be initialized incorrectly but must still contain the true
direction in their local candidate pools.

\begin{theorem}[Erd\H{o}s--R\'enyi exact-recovery phase transition]
\label{thm:er-phase}
Let \(t_1,\ldots,t_n\) be i.i.d. from a compact
\(c_0\)-well-distributed distribution on \(\R^3\).  Let
\(G_n\sim G(n,p)\) independently of the locations.  Conditional on being observed,
each edge is independently clean with probability \(1-q\); let \(R\) be the set
of clean observed edges.  Assume:
\begin{enumerate}
\item[(E1)] If \(e\in R\), then \(g_e^{(0)}=\pm g_e^\star\) and the conditional
        population support of \(g_e^{(0)}\) is \(A_+\).  If \(e\notin R\), the
        conditional population support of \(g_e^{(0)}\) is at most \(A_-\), where
        \(\Delta_{\sigma_0}:=A_+-A_->0\).
\item[(E2)] The support-evaluation measurements satisfy the independence and
        boundedness assumptions of Lemma~\ref{lem:support-concentration}, with at
        least \(N_{\min}\) measurements per observed edge.
\item[(E3)] Every observed edge has at least a \(\pi_{\min}\)-fraction of exact
        inlier correspondence normals for candidate generation, and each edge
        samples \(B\) random two-normal hypotheses.
\item[(E4)] On the candidate-recall event, every non-true candidate in every pool
        is separated from the truth by at least \(\eta>0\) in the error
        metric~\eqref{eq:theory-error}.
\item[(E5)] Clean-clean true triangle contexts are nondegenerate and are retained
        by the update.  With a zero degeneracy threshold this holds almost surely
        for continuous nondegenerate location distributions.
\end{enumerate}
Fix \(\zeta\in(0,1)\) and define
\begin{equation*}
        u_n:=\sqrt{\frac{\log(2n^2/\zeta)}{2N_{\min}}},
        \qquad
        \Delta_{\rm eff}:=\Delta_{\sigma_0}-4u_n.
\end{equation*}
Assume \(\Delta_{\rm eff}>0\) and
\begin{align}
        n p^2(1-q)^2
        &\ge
        Cc_0^{-2}\log(n^2/\zeta),
        \label{eq:er-coverage-cond}\\
        N_{\min}\Delta_{\sigma_0}^2
        &\ge
        C\log(n^2/\zeta),
        \label{eq:er-support-cond}\\
        B\pi_{\min}^2
        &\ge
        C\log(n^2/\zeta),
        \label{eq:er-candidate-cond}
\end{align}
for a sufficiently large numerical constant \(C\).  If
\begin{equation}
        \frac{8}{c_0(1-q)^2}e^{-\beta\Delta_{\rm eff}}<\eta,
        \label{eq:er-beta-cond}
\end{equation}
then one \TriDE{} sweep exactly recovers all observed directions with probability
at least \(1-C\zeta\):
\begin{equation*}
        g_{ij}^+=\pm g_{ij}^\star
        \qquad
        \forall ij\in E(G_n).
\end{equation*}
Consequently, when \(c_0\), \(\Delta_{\sigma_0}\), and \(\eta\) are bounded below
by positive constants and \(\beta\) is chosen to satisfy~\eqref{eq:er-beta-cond},
the graph-coverage threshold is
\begin{equation*}
        n p^2(1-q)^2 \gtrsim \log n.
\end{equation*}
Equivalently, for a given \(p\), a sufficient scaling is
\begin{equation*}
        q
        \le
        1-C\sqrt{\frac{\log n}{n p^2}}.
\end{equation*}
For constant \(p\), the initially clean fraction may be as small as
\(1-q\asymp\sqrt{\log n/n}\).
\end{theorem}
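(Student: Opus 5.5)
The plan is to verify the hypotheses (D1)--(D5) of Theorem~\ref{thm:deterministic-tride} on a single high-probability event. That event is the intersection of four good events, each of probability at least \(1-C'\zeta\): (i) support concentration, (ii) candidate recall, (iii) a degree/coverage event for clean-clean witnesses, and (iv) empirical well-distributedness. (D1) is immediate from (E1). (D5) follows from Lemma~\ref{lem:candidate-recall} under (E3) and~\eqref{eq:er-candidate-cond}, since \(m\le n^2\), combined with (E4). For (D4) I would apply Lemma~\ref{lem:support-concentration} conditional on \(\mathcal F_0\), using (E2) and \(2m\le 2n^2\), to get \(|A_e^{(0)}-\mu_e|\le u_n\) uniformly. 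Then Corollary~\ref{cor:support-gap-from-edges} with \(u=u_n\) gives \(\Delta=\Delta_{\rm eff}>0\). Condition~\eqref{eq:er-support-cond} is what makes \(4u_n\) a small fraction of \(\Delta_{\sigma_0}\).

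The core step is the graph-coverage event. Fix a pair \(i,j\). Conditional on \(ij\) being observed, each third vertex \(k\) is independently a clean-clean witness, i.e.\ \(ik,jk\in E\) and both clean, with probability \(p^2(1-q)^2\). It is an ordinary witness (\(ik,jk\in E\)) with probability \(p^2\). A multiplicative Chernoff bound plus a union bound over at most \(n^2\) pairs gives, under~\eqref{eq:er-coverage-cond}, the bounds
\[
|\mathcal G_{ij}|\ge \tfrac12 (n-2)p^2(1-q)^2 \quad\text{and}\quad |\mathcal N_{ij}|\le 2(n-2)p^2 .
\]
Here (E5) ensures clean-clean contexts are retained, and \(\mathcal N_{ij}\) only shrinks if others are dropped. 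Hence \(a\ge (1-q)^2/4\), and \((1-a)/a\le 4/(1-q)^2\).

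For (D3) I would condition on \(t_i,t_j\) and on the graph and clean labels. Since locations are independent of the graph and labels, the locations \(t_k\), \(k\in\mathcal G_{ij}\), are i.i.d.\ from \(\mu\). The true normals satisfy \(|h^\top n^\star_{ij,k}|=\mathcal W_{t_i,t_j}(t_k,h)\), and \(P_{(t_i-t_j)^\perp}=P_{(g^\star_{ij})^\perp}\). Lemma~\ref{lem:empirical-wd} with \(M=|\mathcal G_{ij}|\gtrsim c_0^{-2}\log(n^2/\zeta)\) and \(L=n^2\) then gives \(c_{\rm wd}=c_0/2\) simultaneously for all pairs after a union bound.

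Plugging into~\eqref{eq:det-error-bound} gives the bound
\[
\frac{4}{(1-q)^2}\cdot\frac{2}{c_0}e^{-\beta\Delta_{\rm eff}}=\frac{8}{c_0(1-q)^2}e^{-\beta\Delta_{\rm eff}},
\]
which is below \(\eta\) by~\eqref{eq:er-beta-cond}. Therefore \(g^+_{ij}=\pm g^\star_{ij}\) on every observed edge. The scaling statements are then algebra on~\eqref{eq:er-coverage-cond}.

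The main obstacle is the conditioning bookkeeping. The support gap requires conditioning on \(\mathcal F_0\), which includes the clean labels and initial directions. Well-distributedness requires the witness locations to be i.i.d.\ from \(\mu\) given that \(k\) is a witness. I would argue that graph and labels are independent of locations, so conditioning on them does not bias \(t_k\). The support-concentration event is conditionally independent of location sampling given \(\mathcal F_0\), so the failure probabilities simply add. A second subtlety is that the witness count may be attached to pairs that are not edges. I would handle this by union-bounding over all \(\binom n2\) pairs rather than over the random edge set, which keeps the log factor \(\log(n^2/\zeta)\).
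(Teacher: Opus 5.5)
Your proposal is correct and follows the paper's proof essentially step for step. You use the same four high-probability events: support concentration via Lemma~\ref{lem:support-concentration} and Corollary~\ref{cor:support-gap-from-edges}, candidate recall, Chernoff bounds giving $a=(1-q)^2/4$, and Lemma~\ref{lem:empirical-wd} with $L=n^2$ giving $c_{\rm wd}=c_0/2$; these are fed into Theorem~\ref{thm:deterministic-tride} to obtain $8/(c_0(1-q)^2)$, and your conditioning remarks (graph and labels independent of locations, union bound over all $\binom{n}{2}$ pairs) just make explicit what the paper uses implicitly.
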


\begin{proof}
We verify the assumptions of Theorem~\ref{thm:deterministic-tride} uniformly over
all observed edges.

Fix an unordered pair \(ij\).  Conditional on \(ij\in E(G_n)\), a vertex \(k\) is
a clean-clean witness if \(ik\in E(G_n)\), \(jk\in E(G_n)\), and both supporting
edges are clean.  Hence
\begin{equation*}
        |\mathcal G_{ij}|
        \sim
        \operatorname{Binomial}(n-2,p^2(1-q)^2),
\end{equation*}
for the all-common-neighbor model; retained triangles can only remove bad
triangles, while assumption (E5) keeps the clean-clean ones.  Chernoff's inequality
and a union bound over at most \(n^2\) pairs imply that, under
\eqref{eq:er-coverage-cond}, with probability at least \(1-\zeta\),
\begin{equation}
        |\mathcal G_{ij}|
        \ge
        \frac12(n-2)p^2(1-q)^2
        \qquad
        \forall ij\in E(G_n).
        \label{eq:er-good-count}
\end{equation}
The total number of common neighbors is
\(\operatorname{Binomial}(n-2,p^2)\).  The same Chernoff and union-bound argument
also gives
\begin{equation}
        |\mathcal N_{ij}|
        \le
        2(n-2)p^2
        \qquad
        \forall ij\in E(G_n)
        \label{eq:er-total-count}
\end{equation}
with probability at least \(1-\zeta\); if the implementation discards some
nonclean degenerate triangles, \(|\mathcal N_{ij}|\) is only smaller.  Combining
\eqref{eq:er-good-count} and~\eqref{eq:er-total-count}, the deterministic
witness-density condition holds with
\begin{equation*}
        a=\frac{(1-q)^2}{4}.
\end{equation*}

Conditional on the clean-clean witness set size, the corresponding third-camera
locations are i.i.d. from the same location distribution, because the ER graph and
clean indicators are independent of the locations.  Lemma~\ref{lem:empirical-wd},
with \(L=n^2\), and the lower bound~\eqref{eq:er-good-count} imply that all
\(\mathcal G_{ij}\)'s are \((c_0/2)\)-well-distributed simultaneously with
probability at least \(1-\zeta\).  Thus
\begin{equation*}
        c_{\rm wd}=c_0/2.
\end{equation*}

By Lemma~\ref{lem:support-concentration}, using \(m\le n^2\), condition
\eqref{eq:er-support-cond} gives
\begin{equation*}
        |A_e^{(0)}-\mu_e|\le u_n
        \qquad
        \forall e\in E(G_n)
\end{equation*}
with probability at least \(1-\zeta\).  Assumption (E1) and
Corollary~\ref{cor:support-gap-from-edges} then give support-sum separation with
\begin{equation*}
        \Delta=\Delta_{\rm eff}=\Delta_{\sigma_0}-4u_n.
\end{equation*}

By Lemma~\ref{lem:candidate-recall}, condition~\eqref{eq:er-candidate-cond}
ensures that all candidate pools contain the corresponding true directions with
probability at least \(1-\zeta\).  Assumption (E4) gives the remaining
\(\eta\)-separation part of candidate admissibility.

On the intersection of these events, Theorem~\ref{thm:deterministic-tride}
applies with \(a=(1-q)^2/4\), \(c_{\rm wd}=c_0/2\), and
\(\Delta=\Delta_{\rm eff}\).  Therefore
\begin{equation*}
        \max_{ij\in E(G_n)}\err(g_{ij}^+,g_{ij}^\star)
        \le
        \frac{1-a}{a\,c_{\rm wd}}e^{-\beta\Delta_{\rm eff}}
        \le
        \frac{8}{c_0(1-q)^2}e^{-\beta\Delta_{\rm eff}}.
\end{equation*}
Condition~\eqref{eq:er-beta-cond} makes the right-hand side smaller than
\(\eta\), so candidate admissibility forces exact recovery on every observed
edge.  The union of the failure probabilities is at most \(C\zeta\).
\end{proof}

\begin{corollary}[Complete graph]
\label{cor:complete-graph}
The complete graph case is obtained from Theorem~\ref{thm:er-phase} by setting
\(p=1\).  The clean-clean witness coverage condition becomes
\begin{equation*}
        n(1-q)^2\gtrsim \log n,
\end{equation*}
so a sufficient scaling is
\begin{equation*}
        q\le 1-C\sqrt{\frac{\log n}{n}},
\end{equation*}
provided the support-gap, candidate-recall, well-distributedness, and
\(\beta\)-sharpness conditions also hold.
\end{corollary}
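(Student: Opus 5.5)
The plan is to specialize Theorem~\ref{thm:er-phase} to \(p=1\) and check that each step of its proof survives, and in fact simplifies. With \(p=1\), \(G_n=K_n\) deterministically, so every unordered pair is an observed edge, \(m=\binom n2\le n^2\), and every other vertex \(k\neq i,j\) is a common neighbor of \(ij\). Hence \(|\mathcal N_{ij}|\le n-2\) with no randomness, and the upper Chernoff bound~\eqref{eq:er-total-count} is replaced by this trivial bound. This improves the witness-density constant to \(a=(1-q)^2/2\), and the constant \(8\) in~\eqref{eq:er-beta-cond} can then be replaced by \(4\). I will keep the weaker constant from the theorem, which still suffices.

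Next, the clean-clean witness count is \(|\mathcal G_{ij}|\sim\operatorname{Binomial}(n-2,(1-q)^2)\). This holds because the clean indicators on \(ik\) and \(jk\) are independent, and retention of clean-clean triangles is guaranteed by (E5). A multiplicative Chernoff bound, together with a union bound over at most \(n^2\) pairs, gives \(|\mathcal G_{ij}|\ge\tfrac12(n-2)(1-q)^2\) for all pairs with probability \(1-\zeta\). This is exactly condition~\eqref{eq:er-coverage-cond} at \(p=1\), namely \(n(1-q)^2\ge Cc_0^{-2}\log(n^2/\zeta)\).

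The same lower bound feeds Lemma~\ref{lem:empirical-wd} with \(L=n^2\), which yields \(c_{\rm wd}=c_0/2\). The third-vertex locations in \(\mathcal G_{ij}\) are i.i.d. from \(\mu\) because the clean labels are independent of the locations. The support-gap step through Lemma~\ref{lem:support-concentration} and Corollary~\ref{cor:support-gap-from-edges}, and the candidate-recall step through Lemma~\ref{lem:candidate-recall}, do not depend on \(p\) at all. Theorem~\ref{thm:deterministic-tride} therefore applies under the stated \(\beta\)-sharpness condition.

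Finally, I rewrite the threshold. When \(c_0\), \(\Delta_{\sigma_0}\) and \(\eta\) are constants, the coverage condition reads \(n(1-q)^2\gtrsim\log n\), since \(\log(n^2/\zeta)\asymp\log n\) for fixed \(\zeta\). Solving for \(q\) gives \(1-q\ge C\sqrt{\log n/n}\).

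The only step needing care is the independence claim in the Chernoff step. For fixed \(ij\), the events over different \(k\) involve disjoint edge pairs \(\{ik,jk\}\), so they are independent. With that checked, the rest is direct substitution into Theorem~\ref{thm:er-phase}, and I do not expect a genuine obstacle.
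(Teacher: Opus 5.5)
Your proposal is correct and follows the paper's route: the corollary is just Theorem~\ref{thm:er-phase} at \(p=1\), where \(G(n,1)=K_n\), so the theorem already applies verbatim and your step-by-step re-check is a consistency check rather than a required argument. Your observation that \(|\mathcal N_{ij}|\le n-2\) holds deterministically is a valid minor sharpening the paper does not state: it gives \(a=(1-q)^2/2\) and the constant \(4\) in place of \(8\) in~\eqref{eq:er-beta-cond}.
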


\subsection{Random geometric graph phase transition}
\label{app:theory-rgg}

We next state the corresponding result for a random geometric graph in three
dimensions.  This model captures the fact that nearby cameras are more likely to
be connected, and it replaces the ER common-neighbor scale \(np^2\) by the lens
scale \(nr^3\).

Let \(\Omega\subset\R^3\) be compact and let \(\mu\) be a probability distribution
supported on \(\Omega\).  Camera locations are i.i.d.:
\begin{equation*}
        t_1,\ldots,t_n\stackrel{\rm i.i.d.}{\sim}\mu.
\end{equation*}
The random geometric graph \(G_{\rm RGG}(n,r)\) connects \(ij\) when
\begin{equation*}
        \|t_i-t_j\|_2\le r.
\end{equation*}
For \(x,y\in\Omega\), define the common-neighborhood lens
\begin{equation*}
        L(x,y;r):=\Omega\cap B(x,r)\cap B(y,r).
\end{equation*}

\begin{assumption}[Lens regularity]
\label{ass:lens-regularity}
There are constants \(r_0>0\), \(0<v_-\le v_+<\infty\), and
\(c_{\rm lens}>0\) such that, for every \(0<r\le r_0\) and every
\(x,y\in\Omega\) with \(0<\|x-y\|_2\le r\):
\begin{enumerate}
\item[(L1)] \textbf{Lens mass.}
\begin{equation*}
        v_-r^3\le \mu(L(x,y;r))\le v_+r^3.
\end{equation*}
\item[(L2)] \textbf{Lens well-distributedness.}  If
        \(Z\sim\mu(\cdot\mid L(x,y;r))\), then for every \(h\in\R^3\),
\begin{equation*}
        \Exp\,\mathcal W_{x,y}(Z,h)
        \ge
        c_{\rm lens}\|P_{(x-y)^\perp}h\|_2.
\end{equation*}
\end{enumerate}
\end{assumption}

\begin{theorem}[Random-geometric exact-recovery phase transition]
\label{thm:rgg-phase}
Let \(t_1,\ldots,t_n\stackrel{\rm i.i.d.}{\sim}\mu\), where \(\mu\) satisfies
Assumption~\ref{ass:lens-regularity}.  Let \(G_{\rm RGG}(n,r)\) connect pairs
within radius \(r\le r_0\).  Conditional on being observed, each edge is
independently clean with probability \(1-q\).  Assume the same support model,
support-evaluation concentration model, candidate-recall model, candidate
separation \(\eta\), and clean-triangle retention condition as in
Theorem~\ref{thm:er-phase}, with population gap
\(\Delta_{\sigma_0}=A_+-A_->0\).

Fix \(\zeta\in(0,1)\) and define
\begin{equation*}
        u_n:=\sqrt{\frac{\log(2n^2/\zeta)}{2N_{\min}}},
        \qquad
        \Delta_{\rm eff}:=\Delta_{\sigma_0}-4u_n.
\end{equation*}
Assume \(\Delta_{\rm eff}>0\) and
\begin{align}
        n r^3(1-q)^2
        &\ge
        C\log(n^2/\zeta),
        \label{eq:rgg-coverage-cond}\\
        N_{\min}\Delta_{\sigma_0}^2
        &\ge
        C\log(n^2/\zeta),
        \label{eq:rgg-support-cond}\\
        B\pi_{\min}^2
        &\ge
        C\log(n^2/\zeta),
        \label{eq:rgg-candidate-cond}
\end{align}
where \(C\) may depend on \(v_-\), \(v_+\), and \(c_{\rm lens}\).  If
\begin{equation}
        \frac{8v_+}{v_-c_{\rm lens}(1-q)^2}
        e^{-\beta\Delta_{\rm eff}}
        <
        \eta,
        \label{eq:rgg-beta-cond}
\end{equation}
then one \TriDE{} sweep exactly recovers every observed direction with probability
at least \(1-C\zeta\):
\begin{equation*}
        g_{ij}^+=\pm g_{ij}^\star
        \qquad
        \forall ij\in E(G_{\rm RGG}(n,r)).
\end{equation*}
Consequently, with fixed geometric, support-gap, and candidate-separation
constants and with \(\beta\) satisfying~\eqref{eq:rgg-beta-cond}, the graph
coverage threshold is
\begin{equation*}
        n r^3(1-q)^2\gtrsim \log n.
\end{equation*}
Equivalently,
\begin{equation*}
        r(1-q)^{2/3}
        \gtrsim
        \left(\frac{\log n}{n}\right)^{1/3}.
\end{equation*}
\end{theorem}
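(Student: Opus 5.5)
The plan mirrors the proof of Theorem~\ref{thm:er-phase}: verify the hypotheses (D1)--(D5) of Theorem~\ref{thm:deterministic-tride} simultaneously for all observed edges on a high-probability event, and then read off the constants. The only model-specific work is the witness counting and the well-distributedness step, where the Erd\H{o}s--R\'enyi common-neighbor count \(\operatorname{Binomial}(n-2,p^2)\) is replaced by a lens count governed by Assumption~\ref{ass:lens-regularity}.

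\textbf{Step 1 (witness counts).} Condition on \(t_i,t_j\) with \(\|t_i-t_j\|\le r\). The other \(n-2\) locations are i.i.d., so the number of common neighbors is \(\operatorname{Binomial}(n-2,\mu(L(t_i,t_j;r)))\). By (L1) its mean lies between \(v_-(n-2)r^3\) and \(v_+(n-2)r^3\). The clean indicators are independent of the geometry, so \(|\mathcal G_{ij}|\) is \(\operatorname{Binomial}(n-2,(1-q)^2\mu(L))\); here (E5) ensures clean-clean triangles are retained. Apply Chernoff's inequality and a union bound over at most \(n^2\) pairs, using \eqref{eq:rgg-coverage-cond} with \(C\) depending on \(v_-\). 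This gives
\[
|\mathcal G_{ij}|\ge\tfrac12 v_-(n-2)r^3(1-q)^2 \quad\text{and}\quad |\mathcal N_{ij}|\le 2v_+(n-2)r^3 .
\]
For the upper bound, a Chernoff bound on a binomial with mean at least \(v_- n r^3\gtrsim\log(n^2/\zeta)\) suffices. Hence (D2) holds with \(a=v_-(1-q)^2/(4v_+)\), and \((1-a)/a\le 4v_+/(v_-(1-q)^2)\).

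\textbf{Step 2 (well-distributedness).} Conditional on \(t_i,t_j\), on the witness set size, and on the clean labels, the clean-clean witness locations are i.i.d. from \(\mu(\cdot\mid L(t_i,t_j;r))\). This holds because membership in the lens is a property of each \(t_k\) alone, and cleanliness is independent of it. By (L2) this conditional law satisfies the population condition with constant \(c_{\rm lens}\) for the fixed pair \(x=t_i,\ y=t_j\). That is exactly what Lemma~\ref{lem:empirical-wd} requires, since its proof only uses the fixed pair. Since \(\mathcal W_{x,y}(z,h)=|h^\top n_{ij,k}^\star|\) for noncollinear triples, which holds almost surely, Lemma~\ref{lem:empirical-wd} with \(L=n^2\) and the Step 1 lower bound give \(c_{\rm wd}=c_{\rm lens}/2\) uniformly. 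The \(c_{\rm lens}^{-2}\) factor is absorbed into \(C\).

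\textbf{Step 3 (remaining hypotheses and conclusion).} The support gap, candidate recall and separation are verbatim as in the ER proof. Lemma~\ref{lem:support-concentration} with \(m\le n^2\), together with Corollary~\ref{cor:support-gap-from-edges}, gives (D4) with \(\Delta=\Delta_{\rm eff}\). Lemma~\ref{lem:candidate-recall} with \eqref{eq:rgg-candidate-cond}, plus the separation assumption, gives (D5). Exact anchors give (D1). On the intersection of these events Theorem~\ref{thm:deterministic-tride} yields
\[
\max_{ij}\err(g_{ij}^+,g_{ij}^\star)\le \frac{4v_+}{v_-(1-q)^2}\cdot\frac{2}{c_{\rm lens}}e^{-\beta\Delta_{\rm eff}} .
\]
By \eqref{eq:rgg-beta-cond} this is below \(\eta\), so exact recovery follows. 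Summing the failure probabilities gives \(C\zeta\), and the threshold statements follow by rearranging \eqref{eq:rgg-coverage-cond}.

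The main obstacle is Step 2's conditioning. The relevant set of witnesses is random and depends on the pair \(t_i,t_j\), which also fixes the lens. The fix is to condition on \(t_i,t_j\) first, apply the lemmas with the remaining \(n-2\) points, and union bound over pairs; the constants are uniform over pairs thanks to (L1)--(L2). A secondary check is that edge-existence events for different pairs are dependent. Because we union bound over all \(\binom n2\) pairs and only require each event to hold whenever the pair is an edge, this dependence causes no trouble.
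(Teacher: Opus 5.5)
Your proposal is correct and follows essentially the same route as the paper's proof. Conditional on \(t_i,t_j\), lens-count Chernoff bounds give \(a=v_-(1-q)^2/(4v_+)\); Lemma~\ref{lem:empirical-wd}, applied to the clean-clean witnesses drawn i.i.d.\ from \(\mu(\cdot\mid L(t_i,t_j;r))\), gives \(c_{\rm wd}=c_{\rm lens}/2\); and the support-gap, recall, and separation events are reused from the Erd\H{o}s--R\'enyi proof before Theorem~\ref{thm:deterministic-tride} is invoked — your use of \(n-2\) in the witness bounds and your conditioning remarks are, if anything, slightly more careful than the paper's write-up.
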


\begin{proof}
Fix an observed edge \(ij\).  Conditional on \(t_i,t_j\), a third vertex \(k\) is
a triangle witness exactly when
\begin{equation*}
        t_k\in L(t_i,t_j;r).
\end{equation*}
It is a clean-clean witness if, in addition, both observed supporting edges are
clean.  Thus, conditional on \(t_i,t_j\),
\begin{equation*}
        |\mathcal G_{ij}|
        \sim
        \operatorname{Binomial}\bigl(n-2,(1-q)^2\mu(L(t_i,t_j;r))\bigr),
\end{equation*}
again using that clean-clean triangles are retained.  By the lower lens-mass bound,
\begin{equation*}
        \Exp[|\mathcal G_{ij}|\mid t_i,t_j]
        \ge
        (n-2)(1-q)^2v_-r^3.
\end{equation*}
Chernoff's inequality and a union bound over all target pairs imply that, under
\eqref{eq:rgg-coverage-cond}, with probability at least \(1-\zeta\),
\begin{equation}
        |\mathcal G_{ij}|
        \ge
        \frac12 n(1-q)^2v_-r^3
        \qquad
        \forall ij\in E(G_{\rm RGG}).
        \label{eq:rgg-good-count}
\end{equation}
The total number of common neighbors has conditional mean at most
\((n-2)v_+r^3\), so the same argument gives
\begin{equation*}
        |\mathcal N_{ij}|
        \le
        2nv_+r^3
        \qquad
        \forall ij\in E(G_{\rm RGG})
\end{equation*}
with probability at least \(1-\zeta\).  Therefore
\begin{equation*}
        \frac{|\mathcal G_{ij}|}{|\mathcal N_{ij}|}
        \ge
        \frac{v_-}{4v_+}(1-q)^2,
        \qquad
        \forall ij\in E(G_{\rm RGG}),
\end{equation*}
so the deterministic density parameter is
\begin{equation*}
        a=\frac{v_-}{4v_+}(1-q)^2.
\end{equation*}

Conditional on \(t_i,t_j\) and the clean-clean witness count, the clean-clean
third-camera locations are i.i.d. from
\(\mu(\cdot\mid L(t_i,t_j;r))\).  The lens well-distributedness assumption and
Lemma~\ref{lem:empirical-wd}, combined with~\eqref{eq:rgg-good-count}, imply that
all clean-clean witness sets are \((c_{\rm lens}/2)\)-well-distributed
simultaneously with probability at least \(1-\zeta\).  Hence
\begin{equation*}
        c_{\rm wd}=c_{\rm lens}/2.
\end{equation*}

The support-concentration and candidate-recall events follow from
\eqref{eq:rgg-support-cond} and~\eqref{eq:rgg-candidate-cond} exactly as in the ER
proof.  Therefore Corollary~\ref{cor:support-gap-from-edges} gives
\(\Delta=\Delta_{\rm eff}\), and all candidate pools are \(\eta\)-admissible.

Applying Theorem~\ref{thm:deterministic-tride} with
\(a=v_-(1-q)^2/(4v_+)\), \(c_{\rm wd}=c_{\rm lens}/2\), and
\(\Delta=\Delta_{\rm eff}\), we obtain
\begin{equation*}
        \max_{ij\in E(G_{\rm RGG})}\err(g_{ij}^+,g_{ij}^\star)
        \le
        \frac{8v_+}{v_-c_{\rm lens}(1-q)^2}
        e^{-\beta\Delta_{\rm eff}}.
\end{equation*}
Condition~\eqref{eq:rgg-beta-cond} makes this bound smaller than \(\eta\), so the
selected candidate on every observed edge must be the true unoriented direction.
The total failure probability is at most \(C\zeta\).
\end{proof}

\subsection{Summary of sufficient graph scalings}
\label{app:theory-scaling-summary}

The deterministic theorem separates graph coverage from local candidate recall
and point-support separation.  Once the support-gap, candidate-admissibility,
well-distributedness, and sufficiently large-\(\beta\) conditions hold, the
clean-clean witness coverage required for one-sweep exact direction recovery is
\begin{align*}
        \text{complete graph:} &\qquad n(1-q)^2 \gtrsim \log n, \\
        \text{ER }G(n,p):      &\qquad n p^2(1-q)^2 \gtrsim \log n, \\
        \text{3D RGG:}        &\qquad n r^3(1-q)^2 \gtrsim \log n.
\end{align*}
The local candidate-recall condition from random two-normal hypotheses is
\begin{equation*}
        B\pi_{\min}^2\gtrsim \log |E|,
\end{equation*}
and the empirical point-support condition is
\begin{equation*}
        N_{\min}\Delta_{\sigma_0}^2\gtrsim \log |E|.
\end{equation*}
The role of \(\beta\) is to make clean-clean witnesses dominate the remaining
triangles.  For example, in the ER model a sufficient sharpness condition is
\begin{equation*}
        \beta>
        \frac1{\Delta_{\rm eff}}
        \log\!\left(\frac{8}{c_0(1-q)^2\eta}\right),
\end{equation*}
and the analogous RGG condition is given in~\eqref{eq:rgg-beta-cond}.

\section{Broader Impacts}
\label{app:broader-impacts}

This work studies pairwise translation-direction estimation for global structure-from-motion. Potential positive impacts include more robust 3D reconstruction, camera localization, mapping, and digitization of real-world scenes. Our experiments use public benchmark data and do not involve human-subject data collection, biometric recognition, or deployment of a real-world sensing system. Responsible use should respect dataset licenses, privacy expectations, and consent when reconstructing real environments.

\end{document}